\setlist[enumerate,1]{label=\textit{\roman*})}
\newcommand{\sH}{\mathcal{H}}
\newcommand{\D}{\mathcal{D}}
\newcommand{\sU}{\mathcal{U}}
\newcommand{\Dist}{\Delta}
\newcommand{\R}{\mathbb{R}}
\newcommand{\E}{\mathbb{E}}
\newtheorem{theorem}{Theorem}
\newtheorem{definition}{Definition}
\newtheorem{assumption}{Assumption}
\newtheorem*{assumption*}{Assumption}
\newtheorem*{proposition*}{Proposition}
\newtheorem{corollary}{Corollary}
\newtheorem*{corollary*}{Corollary}
\newcommand{\ie}{\emph{i.e.}}
\newcommand{\pomdp}{\mathcal{M}}
\newcommand{\actionmap}{\ensuremath{\psi}}
\newcommand{\memupdate}{\ensuremath{\eta}}
\newcommand{\sN}{\ensuremath{\mathcal{N}}}
\newcommand{\pib}{\ensuremath{\pi_\beta}}
\newcommand{\Pib}{\ensuremath{\Pi_\beta}}
\newcommand{\pii}{\ensuremath{\pi_I}}
\newcommand{\nz}{\ensuremath{\langle n, z \rangle}}
\newcommand{\nza}{\ensuremath{\nz, a}}
\newcommand{\nzprime}{\ensuremath{\langle n', z' \rangle}}
\newcommand{\NZ}{\ensuremath{\sN\!\!\times\!\!Z}}
\newcommand{\mlemdp}{\ensuremath{\tilde{M}}}
\newcommand{\Nmin}{\ensuremath{N_\wedge}}
\title{Safe Policy Improvement for POMDPs via Finite-State Controllers}
\author{
    Thiago D. Sim{\~a}o,\thanks{Equal contribution.}\quad
    Marnix Suilen,\footnotemark[1] \quad
    Nils Jansen
}
\begin{document}

\maketitle

\begin{abstract}

We study \emph{safe policy improvement} (SPI) for partially observable Markov decision processes (POMDPs).
SPI is an offline reinforcement learning (RL) problem that assumes access to (1) historical data about an environment, and (2) the so-called \emph{behavior policy} that previously generated this data by interacting with the environment.
SPI methods neither require access to a model nor the environment itself, and aim to reliably improve the behavior policy in an offline manner.
Existing methods make the strong assumption that the environment is fully observable.
In our novel approach to the SPI problem for POMDPs, we assume that a finite-state controller (FSC) represents the behavior policy and that finite memory is sufficient to derive optimal policies.
This assumption allows us to map the POMDP to a finite-state fully observable MDP, the \emph{history MDP}.
We estimate this MDP by combining the historical data and the memory of the FSC, and compute an improved policy using an off-the-shelf SPI algorithm.
The underlying SPI method constrains the policy-space according to the available data, such that the newly computed policy only differs from the behavior policy when sufficient data was available.
We show that this new policy, converted into a new FSC for the (unknown) POMDP, outperforms the behavior policy with high probability.
Experimental results on several well-established benchmarks show the applicability of the approach, even in cases where finite memory is not sufficient.
\end{abstract}

\section{Introduction}
Reinforcement learning (RL) is a standard approach to solve sequential decision-making problems when the environment dynamics are unknown~\citep{DBLP:books/lib/SuttonB98}. 
Typically, an RL agent interacts with the environment and optimizes its behavior according to environment's feedback.
However, in offline RL~\citep{DBLP:journals/corr/abs-2005-01643}, the RL agent receives a fixed dataset of past interactions between a behavior policy and the environment and derives a new policy with no direct interaction with the environment.
One of the challenges in offline RL is to ensure that the new policy outperforms the behavior policy \citep{DBLP:conf/icml/ChengX0A22}.
This problem is called \emph{safe policy improvement}~\cite[SPI;][]{DBLP:conf/icml/ThomasTG15}.
Most of the approaches to SPI assume fully observable environments, see for instance~\citep{Petrik2016,DBLP:conf/icml/LarocheTC19}.

\begin{figure}[tbp]
    \centering
    \resizebox{.85\columnwidth}{!}{\def\svgwidth{240pt}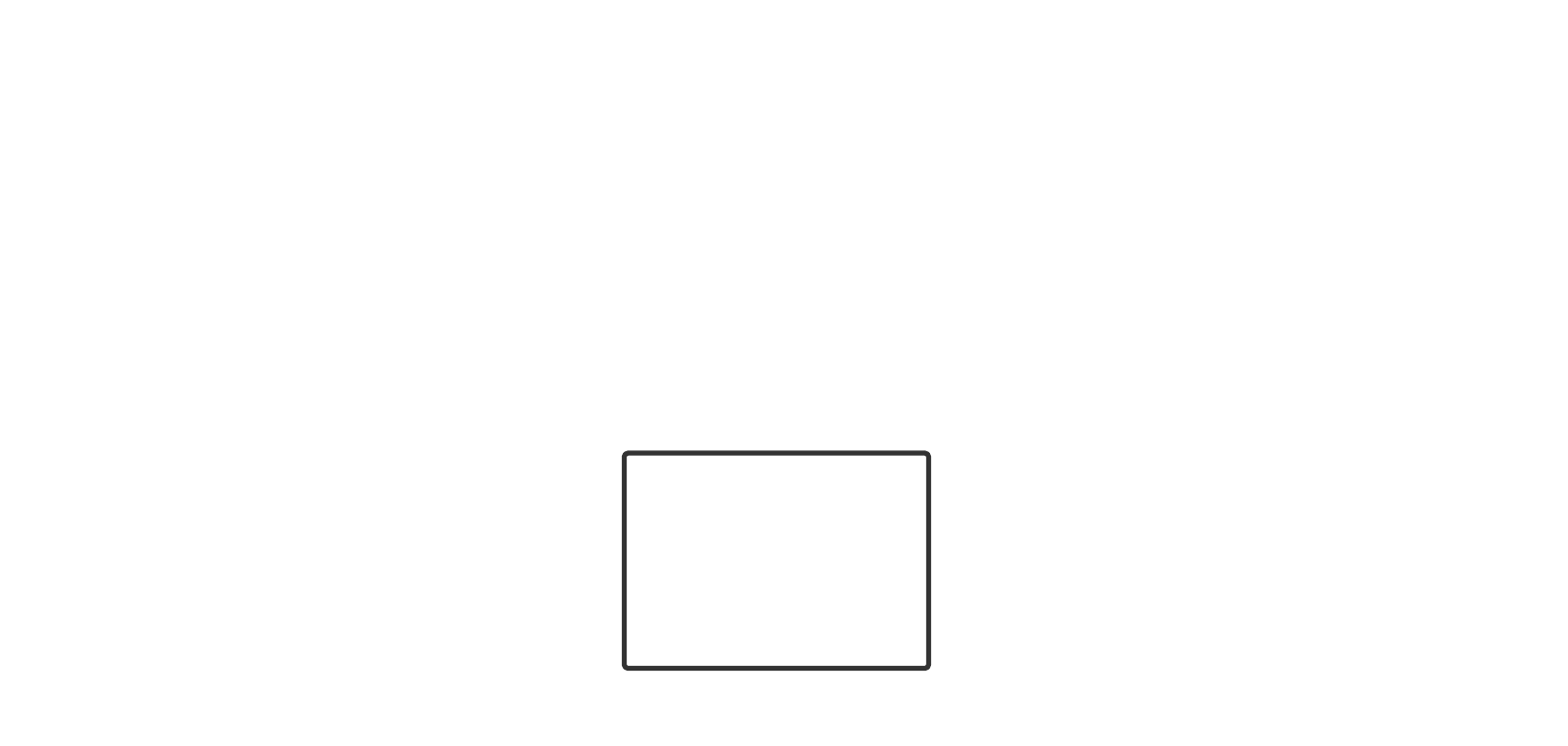}
    \caption{Illustration of the offline reinforcement learning problem in partially observable environments \citep[adapted from][]{DBLP:journals/corr/abs-2005-01643}. The dashed arrow indicates the setting where the behavior policy is available during learning.}
    \label{fig:outline}
\end{figure}

The restriction to full observability poses a serious limitation on the applicability of SPI, as most real-world problems are \emph{partially observable}, due to, for instance, noisy sensors~\citep{kochenderfer2015decision}.
\emph{Partially observable Markov decision processes} (POMDPs) are the standard model for decision-making problems under partial observability~\citep{DBLP:journals/ai/KaelblingLC98}.
So far, SPI for POMDPs was only studied for memoryless policies~\citep{DBLP:conf/aaai/ThomasTG15,DBLP:conf/itp/YeagerMNT22}.
However, POMDP policies often require a notion of memory.
In general, optimal policies for POMDPs with infinite horizons require infinite memory, rendering this problem undecidable~\cite{DBLP:journals/ai/MadaniHC03}.
Nevertheless, finite memory can make good approximations of the optimal policy \cite{DBLP:conf/icml/Bonet02} and are often used in practice for being more explainable \cite{DBLP:conf/aaai/DujardinDC17}.
Policies with finite memory may take the form of \textit{finite-state controllers}~\citep[FSCs;][]{DBLP:conf/uai/MeuleauKKC99}.

\paragraph{Our approach. }
We contribute a novel SPI approach for POMDPs.
First, to account for the inherent memory requirement in partially observable domains, we present the behavior policy as an FSC.
To create a tractable method, 
we assume that there exists a finite-memory policy for the POMDP that is optimal, also known as the {finite-history-window} approach~\citep[Section 7.3]{DBLP:journals/jair/KaelblingLM96}.
This assumption allows us to cast the POMDP as an equivalent, fully observable, \emph{history MDP} that is finite, instead of the standard infinite-history MDP~\citep{DBLP:conf/nips/SilverV10}.
We are then able to reliably estimate the transition and reward models of this finite-history MDP from the available data.
We employ a specific SPI method for MDPs, called safe policy improvement with baseline bootstrapping~\cite[SPIBB;][]{DBLP:conf/icml/LarocheTC19}.
In particular, we compute an improved policy that outperforms the behavior policy up to an \emph{admissible performance loss} with high probability.
In comparison to the approach for mere MDPs~\citep{DBLP:conf/icml/LarocheTC19}, we derive an improved bound on this admissible performance loss by exploiting the specific structure of the history MDP.
The general flow of our approach is illustrated in Figure~\ref{fig:outline}.

\paragraph{Real-world applications.}
This setting captures multiple real-world applications, such as predictive maintenance, conservation of endangered species, and management of invasive species. We may, for instance, have data from the degradation process of a certain asset, which includes logs of inspections and maintenance that were performed according to a fixed schedule (represented, for instance, as a finite-state controller). Once we acquire a new asset, we can formalize the optimization problem with offline RL to compute a new schedule, using the original schedule as a behavior policy.

We demonstrate the applicability of our method on three standard POMDP problems.
The evaluation confirms the theoretical findings of our SPIBB approach, in comparison to standard offline RL.
We highlight results for varying sizes of memory, and show that we can achieve reliable performance improvement even for problems where finite memory is not sufficient in general.

\section{Background}\label{sec:background}
For a countable set $X$ we write $|X|$ for the number of elements in $X$, and $\Dist(X)$ for the set of probability distributions over $X$ with finite support.
Given two probability distributions  $P, Q \in \Dist(X)$, the \emph{L1-distance} between $P$ and $Q$ is
\[
    \| P - Q\|_1= \sum_{x\in X}|P(x) - Q(x)|.
\]
The \emph{L1-error} of an estimated probability distribution $\tilde{P}$ is given by the L1-distance between $\tilde{P}$ and the true distribution $P$: $\| \tilde{P} - P\|_1$.
Finally, we write $\mathbb{I}(x=y)$ for the indicator function returning $1$ when $x = y$ and $0$ otherwise, and $[l:m]$ for the set of natural numbers $\{l, \dots, m\} \subset \mathbb{N}$.

\subsection{MDPs, POMDPs, and FSCs}
\begin{definition}[POMDP]
A \emph{partially observable Markov decision process} (POMDP) is a tuple $\mathcal{M} = \langle S, A, T, R, \gamma, Z, O \rangle$, where $S$, $A$ and $Z$ are finite sets of states, actions, and observations, $T \colon S \times A \to \Dist(S)$ is the transition function, $R \colon S \times A \to [R_{\min} , R_{\max}] \subset \R$ is the reward function with known bounds, $\gamma \in [0,1) \subset \R$ is the discount factor, and $O \colon S \times A \to \Dist(Z)$ is the observation function.
\end{definition}
As a special case we have the (fully observable) Markov decision process \citep[MDP;][]{Puterman1994}, where $Z = S$ and $O(z | s, a) = \mathbb{I}(z=s)$, so it can be defined as a POMDP without observations:  $M = \langle S,A,T,R,\gamma \rangle$.

A \emph{history} is a sequence of observations and actions: $h \in (Z \times A)^* \times Z$.
We denote the set of all histories by $\sH$, and $\sH_k$ denotes all histories of maximal length $k$, where the length $|h|$ is the number of observations in the history $h$.

A \emph{belief} $b \in \Dist(S)$ is a distribution over the states of a POMDP.
Beliefs are \emph{sufficient statistics} for histories in POMDPs~\citep{aastrom1965optimal,DBLP:journals/ior/SmallwoodS73}.
That is, they provide just as much information as the histories themselves.
A belief $b$ can be updated into a new belief~$b'$ upon taking an action $a$ and receiving an observation $z$ by performing a Bayesian \emph{belief update}~\cite{DBLP:journals/ai/KaelblingLC98}.
The belief $b'$ of being in state $s$ given some history $h'$, denoted $b(s \mid h')$, can be recursively computed by repeated applications of the belief update on actions $a$ and observations $z$ in the history $h' = haz$:
\[
 b'(s' \mid haz) = b'(s' \mid b(\cdot \mid h), a, z).
\]
until the history $h$ is empty, denoted $\emptyset$, and where $b(\cdot \mid \emptyset)$ is the initial belief.

A POMDP is equivalent to an infinite-state fully observable MDP called the \emph{history MDP}~\citep{DBLP:conf/nips/SilverV10}.
\begin{definition}[History MDP]\label{def:hismdp}
The fully observable \emph{history MDP} of a POMDP is $\langle \sH, A, T_{\sH}, R_{\sH}, \gamma \rangle$, where $\sH$ is the set of all histories, $A$ and $\gamma$ are the actions and discount factor from the POMDP, and $T_\sH \colon \sH \times A \to \Dist(\sH)$ and $R_\sH \colon \sH \times A \to \R$ are the transition and reward functions:
\begin{align*}
  &  T_\sH(haz \mid h, a) = \sum_{s \in S} b(s \mid h) \sum_{s' \in S} T(s' \mid s, a) O(z \mid s', a),\\
  &  R_\sH(h,a) = \sum_{s \in S} b(s \mid h) R(s,a).
\end{align*}
\end{definition}
Since belief states are sufficient statistics for histories, the so-called \emph{belief MDP} of a POMDP serves as a common alternative for the history MDP, we refer to to~\citet{DBLP:journals/ai/KaelblingLC98} for more details.

A \emph{policy} for a POMDP is a function $\pi \colon \sH \to \Dist(A)$, mapping histories to distributions over actions.
The set of all policies is $\Pi$.
The goal is to compute a policy that maximizes the expected discounted reward in the infinite horizon:
\[
\max_{\pi \in \Pi} \E\left[ \sum_{t=1}^\infty \gamma^t r_t \right],
\]
where $r_t$ is the reward the agent receives at time step $t$ when following policy $\pi$.
In general, a policy that maximizes the expected discounted reward requires infinite memory, that is, it needs to account for all possible histories.
As such, computing optimal policies in POMDPs is undecidable in general~\citep{DBLP:journals/ai/MadaniHC03}.

We may instead use policies that only use a finite amount of memory.
In general, such policies are not optimal, that is, they do not maximize the expected discounted reward, yet they are computationally more tractable.
A \emph{finite-memory} policy maps finite histories to actions, $\pi \colon \sH_k \to \Dist(A)$, and can be represented by a finite-state controller of size~$\kappa = (|Z|+1)^{k}$~\citep{DBLP:conf/uai/MeuleauKKC99,DBLP:conf/uai/Junges0WQWK018}, as we need to account for all possible combinations of observations of size $k$, with a possible empty observation.

\begin{definition}[Finite-state controller]
A \emph{finite-state controller} (FSC) is a tuple $\langle \sN, n^0, \actionmap, \memupdate \rangle$ where
$\sN$ is a finite set of \emph{memory nodes}, $n^0 \in \sN$ is an \emph{initial node}, $\actionmap \colon \sN \times Z \to \Dist(A)$ is an \emph{action mapping}, and $\memupdate \colon \sN \times Z \times A \to \sN$ is a \emph{memory update function}.
A $\kappa$-FSC is an FSC with $|\sN| = \kappa$.
\end{definition}

In any time step $t$, given the current memory node $n_t$ and observation $z_t$, the action $a_t$ is randomly drawn from the distribution $\actionmap(\cdot \mid n_t,z_t)$, and the memory node is updated to $n_{t+1} = \memupdate(n_t,z_t,a_t)$.
A policy represented by an FSC can get arbitrarily close to the optimal policy as $\kappa$ grows~\citep{DBLP:conf/icml/Bonet02}.
Computing finite-state controllers that aim to maximize the expected (discounted) reward can be done in several ways, among those gradient descent~\citep{DBLP:conf/uai/MeuleauPKK99} and convex optimization~\citep{DBLP:journals/aamas/AmatoBZ10,DBLP:conf/uai/Junges0WQWK018,DBLP:conf/aaai/Cubuktepe0JMST21}.

We denote the set of finite-memory policies of size $k$ by~$\Pi_k$, and the set of finite-memory policies of size $k$ represented by FSCs with some fixed memory update $\memupdate$ by $\Pi_k^\memupdate$.
Furthermore, FSCs of size $k$ can be represented by FSCs with more memory.
These sets are related by the following inclusions, where $k' > k$:
$
    \Pi_k^\memupdate \subset \Pi_k \subset \Pi_{k'} \subset \Pi.
$

Finally, we define the state-based and state-action-based value functions on an (PO)MDP $M$ with policy $\pi$ as $V_\pi^M(s)$ and $Q_\pi^M(s,a)$ respectively.
Whenever $M$ or $\pi$ are clear from the context we will omit them.
The \emph{performance} of a policy $\pi$ in $M$ is denoted by $\rho(\pi, M)$ and is defined as the expected value in some initial state $s_0$, that is, $\rho(\pi, M) = V_\pi^M(s_0)$. 
Furthermore, we write $V_{\max}$ for a known upper bound on the absolute value of the performance: $V_{\max} \leq \nicefrac{R_{\max}}{1-\gamma}$.

\subsection{Safe Policy Improvement on MDPs}
Here, we review safe policy improvement (SPI) for MDPs.
A \emph{dataset} is a sequence $\D$ of trajectories collected under a \emph{behavior policy} $\pib$ in an MDP $M^*$. 
For MDPs, the datasets we consider for SPI are of the form $\D = \langle s_t, a_t, r_t \rangle_{t \in [1:m]}$, and we write $\#_{\D}(x)$ for the number of times $x$ occurs in $\D$.
The goal of SPI is to compute a new policy $\pii$ based on $\D$ that outperforms $\pib$ with an allowed performance loss~$\zeta \in \R$ with high probability~$1-\delta$.

SPI operates on a set of \emph{admissible MDPs} $\Xi$, that is, MDPs $M = (S,A,T,R,\gamma)$ which are `close' to an MDP $\mlemdp = (S,A,\tilde{T}, \tilde{R}, \gamma)$ estimated from a dataset $\D$ by maximum likelihood estimation (MLE).

\begin{definition}[MLE-MDP]\label{def:mlemdp}
The MLE-MDP of an unknown true MDP $M^* = (S,A, T, R, \gamma)$ and a dataset $\D$ is a tuple  $\mlemdp = (S,A,\tilde{T}, \tilde{R}, \gamma)$ with 
transition and reward functions $\tilde{T}$ and $\tilde{R}$ derived from $\D$ via maximum likelihood estimation:
\[
\tilde{T}(s' \mid s, a) = \frac{\#_{\D}(s,a,s')}{\#_{\D}(s,a)}, \text{ and }
    \tilde{R}(s,a) = \frac{R_{\text{total}}(s,a)}{\#_{\D}(s,a)},
\]
where $R_{\text{total}}(s,a)$ is the sum of all rewards in $(s,a)$:
$R_{\text{total}}(s,a) = \sum_{(s_t, a_t, r_t) \in \D} \mathbb{I}(s_t=s \wedge a_t = a) \cdot r_t$.
\end{definition}
For an MLE-MDP $\mlemdp$ and error function $e \colon S \times A \to \R$, we define the set of admissible MDPs $\Xi_e^{\mlemdp}$ with a transition function $T$ that has $L_1$ distance to the estimated transition function $\tilde{T}$ bounded by the error function $e$:
\begin{align*}
    &\Xi_e^{\mlemdp} = \{ M \mid \forall (s,a). \| T(\cdot \mid s,a) - \tilde{T}(\cdot \mid s,a) \|_1 \leq e(s,a) \}.
\end{align*}

The general idea behind SPI methods is to define this error function $e$ such that $\Xi_e^{\mlemdp}$ includes the true MDP $M^*$ with high probability $1-\delta$ \citep[Proposition~$9$]{Petrik2016}.
Then one can compute a new policy which is an improvement \textit{for all} MDPs within $\Xi_e^{\mlemdp}$.
An alternative is to simply solve the MLE-MDP, but this could lead to arbitrarily poor policies when the amount of data is insufficient.
If, however, the amount of data is sufficient for all state-action pairs, then we can guarantee with high probability that the improved policy computed on the MLE-MDP has a higher performance.
Specifically, as pointed by \citet{DBLP:conf/icml/LarocheTC19}, the amount of data is sufficient when for all state-action pairs
\begin{equation}
\#_{\D}(s,a) \geq \frac{8V_{\max}^2}{\zeta^2 (1-\gamma)^2} \log \frac{2|S||A|2^{|S|}}{\delta}.\label{eq:spi_n_hat}
\end{equation}
Then with probability $1-\delta$ an optimal policy $\pii$ for $\mlemdp$ is $\zeta$-approximately safe with respect to the true MDP $M^*$ for some \emph{admissible performance loss} $\zeta \in \R$.
That is,
\[
    \rho(\pii, M^*) \geq \rho(\pi^*, M^*) - \zeta \geq \rho(\pib, M^*) - \zeta,
\]
where $\pi^*$ is an optimal policy in the true MDP $M^*$.
Intuitively this ensures that the estimated transition function is close enough to the true MDP to guarantee that the policy computed in the MLE-MDP approximately outperforms the behavior policy in the underlying MDP.

\subsection{SPI with Baseline Bootstrapping on MDPs}
The bound in Equation~\eqref{eq:spi_n_hat} needs to hold for every state-action pair, which limits the practical use of optimizing the MLE-MDP.
The \emph{SPI with baseline bootstrapping} \citep[SPIBB;][]{DBLP:conf/icml/LarocheTC19} algorithm overcomes this limitation, allowing the constraint in \eqref{eq:spi_n_hat} to be violated on some state-action pairs.
These state-action pairs are collected in $\sU$, the set of \emph{unknown} state-action pairs with counts smaller than a given hyperparameter $\Nmin$:
\[
    \sU = \{ (s,a) \in S \times A \mid \#_{\D}(s,a) \leq \Nmin \}.
\]
SPIBB computes an improved policy $\pii$ on $\mlemdp$ as above, except that $\pii$ is constrained to follow $\pib$ for unknown state-action pairs: $\forall (s,a) \in \sU.\, \pii(a \mid s) = \pib(a \mid s)$.
Then, $\pii$ is a $\zeta$-approximately safe improvement of $\pib$ with high probability $1-\delta$,
where $\zeta$ is computed via~\citep[Theorem 2;][]{DBLP:conf/icml/LarocheTC19}:
\begin{equation}
\begin{split}\small
\zeta = \frac{4V_{\max}}{1-\gamma} \sqrt{\frac{2}{\Nmin} \log \frac{2|S||A|2^{|S|}}{\delta}} - \rho(\pii, \mlemdp) + \rho(\pib, \mlemdp).\label{eq:spibb_zeta_bound}
\end{split}
\end{equation}

\section{SPIBB for POMDPs}
Now we detail our approach to apply SPIBB to POMDPs.

\paragraph{Formal problem statement. } 
Given a POMDP $\pomdp = \langle S, A, T, R, \gamma, Z, O \rangle$ of which the transition and observation functions are unknown, some initial belief $b \in \Dist(S)$, and a finite-memory behavior policy represented as a $\kappa$-FSC $\pib = (\sN, n^0, \actionmap, \memupdate)$, the goal is to apply SPIBB to construct a new $\kappa$-FSC $\pii = (\sN,n^0,\actionmap', \memupdate)$ with the same nodes and memory structure $\memupdate$, \ie{} $\pib, \pii \in \Pi_k^\memupdate$, such that with high probability $1-\delta$, $\pii$ is a $\zeta$-approximately safe improvement over $\pib$ with respect to $\pomdp$.
That is, with a probability of at least $1-\delta$ we have
\[
\rho(\pii, \pomdp) \geq \rho(\pib, \pomdp) - \zeta.
\]

\subsection{From POMDP to Finite-History MDP}
While a POMDP can be mapped to a fully observable history MDP (Definition~\ref{def:hismdp}), this MDP has infinitely many states, making a direct application of SPI(BB) methods infeasible.
To mitigate this issue, we make an assumption on the structure of the history MDP (and inherently on the POMDP) that implies that the history MDP is equivalent to a smaller, finite, MDP. 
We formalize this assumption via stochastic bisimulation~\citep{DBLP:journals/ai/GivanDG03}.
Intuitively, this bisimulation is an equivalence relation that relates (history) states that \emph{behave} similarly according to reward signals.

\begin{definition}[Bisimilarity of history states]
A stochastic bisimulation relation $E \subseteq \sH \times \sH$ on history states $h_1, h_2 \in \sH$ is an equivalence relation satisfying
\begin{align*}
&E(h_1,h_2) \iff \forall a\in A.\quad R_\sH(h_1,a) = R_\sH(h_2,a) \text{ and } \\
 &\quad \forall h_1',h_2' \in \sH \text{ with } E(h_1',h_2') \text{ we have }\\
 &\quad  \qquad T_\sH(h_1' \mid h_1, a) = T_\sH(h_2' \mid h_2, a).
\end{align*}
The largest stochastic bisimulation relation is called (stochastic) bisimulation, denoted by $\sim$.
We write $[h]_\sim$ for the equivalence class of history $h$ under $\sim$, and $\sH/_{\sim}$ for the set of equivalence classes.
\end{definition}

\begin{assumption}[Sufficiency of finite histories]\label{asm:1}
Every history state $h$ of size $|h| > k$ in the history MDP is bisimilar to a history state $h'$ of size $|h'| \leq k$.
That is, $h \sim h'$.
\end{assumption}
\noindent As a consequence, the history MDP satisfying Assumption~\ref{asm:1} has a finite bisimulation quotient MDP~\cite{DBLP:journals/ai/GivanDG03}, and we call it a \emph{finite-history MDP} instead.
This finite-history MDP consists of states that are the equivalence classes of histories under $\sim$.

\begin{definition}[Finite-history MDP]\label{def:finhismdp}
A POMDP satisfying Assumption 1 is a fully observable finite-state MDP $M = (\sH/_{\sim}, A, T_H, R_H, \gamma)$
where the states are given by the set of equivalence classes, the actions and discount factor from the POMDP, and transition and reward functions defined as
\begin{align*}
      &T_H([haz]_\sim \mid [h]_\sim, a) = \\
      &\quad\sum_{s \in S} b(s \mid [h]_\sim) \sum_{s' \in S} T(s' \mid s, a)O(z \mid s', a),\\
      &R_H([h]_\sim, a) = \sum_{s \in S} b(s \mid [h]_\sim)R(s,a).
\end{align*}
\end{definition}

\noindent Under bisimulation equivalence, the finite-history MDP and the POMDP are related in the following fundamental way.
\begin{theorem}[Optimal finite-memory policies under bisimilarity]\label{thm:bisim}
An optimal policy $\pi^*$ in the finite-history MDP is an optimal finite-memory policy for the POMDP.
\end{theorem}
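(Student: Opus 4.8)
The plan is to combine the two equivalences already set up in the paper: a POMDP behaves exactly like its (infinite) history MDP, and the finite-history MDP of Definition~\ref{def:finhismdp} is just the stochastic bisimulation quotient of that history MDP, which Assumption~\ref{asm:1} makes finite. First, I would recall that $\pomdp$ and its history MDP $M_{\sH}$ (Definition~\ref{def:hismdp}) share the policy space $\sH\to\Dist(A)$, that every policy $\pi$ has the same value in both, so that $\rho(\pi,\pomdp)$ equals the value of $\pi$ in $M_{\sH}$ started from the initial history, and hence that the two models have the same optimal performance \citep{DBLP:conf/nips/SilverV10}. By construction the states of the finite-history MDP $M$ are the bisimulation classes of $M_{\sH}$ and its $T_H,R_H$ are the quotient transition and reward functions, so $M$ is the stochastic bisimulation quotient of $M_{\sH}$, finite by Assumption~\ref{asm:1} \citep{DBLP:journals/ai/GivanDG03}.

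Second, I would invoke the standard optimality-preservation property of stochastic bisimulation \citep{DBLP:journals/ai/GivanDG03}: bisimilar states of an MDP have equal optimal $\gamma$-discounted value, and any stationary optimal policy $\pi^*$ of the quotient lifts to an optimal policy of the original MDP by playing $\pi^*([h]_\sim)$ at state $h$. Applied to $M$ and $M_{\sH}$, the lifted policy $\hat\pi$ with $\hat\pi(h)=\pi^*([h]_\sim)$ has, at every $h\in\sH$, value in $M_{\sH}$ equal to the optimal value of $M_{\sH}$ there, which in turn equals the optimal value of $M$ at $[h]_\sim$. Reading $\hat\pi$ back as a policy of $\pomdp$ via the first step, $\hat\pi$ attains $\max_{\pi\in\Pi}\rho(\pi,\pomdp)$; it is therefore optimal over \emph{all} POMDP policies, finite-memory or not.

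Third, I would show that $\hat\pi$ is a finite-memory policy. By construction $\hat\pi$ is constant on each bisimulation class, and by Assumption~\ref{asm:1} there are finitely many classes, each with a representative in $\sH_k$; hence $\hat\pi$ factors through the finite set $\sH/_{\sim}$ and can be carried by a finite-state controller whose memory tracks the relevant class after every history. The natural way to realize such a memory is through the Bayesian belief update: beliefs are sufficient statistics for histories, and $b\mapsto b'$ is a deterministic function of $(b,a,z)$, so it is a well-defined memory update $\memupdate$, with Assumption~\ref{asm:1} keeping the construction finite. This exhibits $\hat\pi$ as an element of $\Pi_k$, representable by a $\kappa$-FSC with $\kappa=(|Z|+1)^k$. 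Finally, since $\Pi_k\subseteq\Pi$ and $\hat\pi$ already realizes the unconstrained optimum, no finite-memory policy can do better, so $\hat\pi$ — equivalently, $\pi^*$ viewed as a policy for $\pomdp$ — is an optimal finite-memory policy.

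I expect the third step to be the main obstacle; the first two are essentially bookkeeping on top of \citet{DBLP:journals/ai/GivanDG03} (one only checks that their value-equivalence results, usually phrased for finite MDPs, carry over to the infinite-state $M_{\sH}$, and that the formulas for $T_H,R_H$ in Definition~\ref{def:finhismdp} indeed compute the quotient's transition and reward functions). The delicate point is passing from ``$\hat\pi$ is constant on the finitely many bisimulation classes'' to ``$\hat\pi$ is representable by an FSC with a memory update $\memupdate$ reading only observations and actions'': the successor class $[haz]_\sim$ need not be a function of $([h]_\sim,a,z)$ alone, so the memory cannot simply be taken to be the class itself. Assumption~\ref{asm:1} is exactly what guarantees a finite, well-defined choice of memory (for instance, the reachable beliefs under the belief update, or a suitable finite refinement of $\sim$) on which $\hat\pi$ still factors, and tying the two together is the step where the real work lies.
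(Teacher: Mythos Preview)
Your proposal is correct and takes exactly the route the paper does: the paper's entire proof is the single sentence ``Theorem~\ref{thm:bisim} is a direct result of bisimilarity~\cite{DBLP:journals/ai/GivanDG03},'' and your three steps (POMDP $\equiv$ history MDP, pass to the bisimulation quotient, lift the optimal quotient policy back via \citet{DBLP:journals/ai/GivanDG03}) just unpack that citation. The subtlety you flag in the third step---whether $[haz]_\sim$ is determined by $([h]_\sim,a,z)$ so that the lifted policy is realizable by an FSC---is more careful than anything the paper provides; the paper never addresses it and simply proceeds to Definition~\ref{def:fscmdp}, where an FSC memory structure is postulated, so you have not missed any argument the paper actually gives.
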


Theorem~\ref{thm:bisim} is a direct result of bisimilarity~\cite{DBLP:journals/ai/GivanDG03}.
We may number the equivalence classes in the finite-history MDP in such a way that they correspond to memory nodes of an FSC.
As a result, the finite-history MDP can be defined on a state-space consisting of memory nodes and observations, rather than histories.

\begin{definition}[Finite-history MDP via FSC]\label{def:fscmdp}
A POMDP satisfying Assumption 1 is a fully observable finite-state MDP $M = (\sN \times Z, A, T_H, R_H, \gamma)$
where the states are given by pairs of memory nodes from an FSC and observations, the actions from the POMDP, and transition and reward functions defined as
\begin{align*}
      &T_H(\nzprime \mid \nz, a) = \\
      &\,\,\sum_{s \in S} b(s \mid \nz) \sum_{s' \in S} T(s' \mid s, a)O(z' \mid s', a)\memupdate(n' \mid n, z', a),\\
      &R_H(\nza) = \sum_{s \in S} b(s \mid \nz)R(s,a).
\end{align*}
Where $b(s \mid \nz)$ is the belief of being in state $s$ of the POMDP given memory node $n$ and observation $z$.
\end{definition}
This finite-history MDP will serve as the (unknown) true MDP $M^*$ in our application of SPIBB, and $\memupdate$ is the memory update function of the FSC.

\subsection{Estimating the Finite-History MDP}
Next, we describe how to estimate the true finite-history MDP $M^*$ by an MLE-MDP $\mlemdp$.
The approach is similar to that of SPI for MDPs described in Section~\ref{sec:background}, except that the dataset $\D$ is different.
Here, $\D$ is collected from simulating the POMDP $\pomdp$ under (FSC) policy $\pib$.
This yields a dataset of the form
\begin{equation}\label{eq:dataset}
    \D = \langle \langle n_{t},z_{t} \rangle, a_{t}, r_t \rangle_{t \in [1:m]}
\end{equation}
where the observations $z_t$ come from the observation function, and the memory nodes $n_t$ are observed from the FSC.

\begin{definition}[Finite-history MLE-MDP]\label{def:finhismlemdp}
The MDP from Definition~\ref{def:fscmdp} can be estimated from a dataset $\D$ of the form~\eqref{eq:dataset}, following the same approach for estimating a standard MLE-MDP as in Definition~\ref{def:mlemdp}:
\begin{align*}
  &  \tilde{T}_H( \nzprime \mid \nza) = \frac{\#_{\D}(\nza, \nzprime)}{\#_{\D}(\nza)}, \text{ and } \\
  &  \tilde{R}_H(\nza) = \frac{R_\text{total}(\nza)}{\#_{\D}(\nza)}, \text{ where }\\
    &R_\text{total}(\nza) = \hspace{-2em}
    \sum\limits_{(\langle n_{t},z_{t} \rangle, a_t, r_t) \in \D} \hspace{-2em} \mathbb{I}(\langle n_{t},z_{t} \rangle = \langle n, z \rangle \wedge a_t = a) \cdot r_t.
\end{align*}

\end{definition}

\subsection{Applying SPIBB to the Finite-History MDP}
In this section we apply the theory of SPIBB, as introduced in Section~\ref{sec:background}, to our setting.
In particular, we have just defined a true MDP $M^*$ (the finite-history MDP, Definition~\ref{def:finhismdp}) and an MLE-MDP $\mlemdp$ estimating $M^*$ (Definition~\ref{def:finhismlemdp}).
Let 
$$
\sU = \{ (\nza) \in \sN \times Z \times A \mid \#_{\D}(\nza) \leq \Nmin \}
$$ 
be the set of tuples $(\nza)$ which occur less than $\Nmin$ times in the dataset $\D$ for some hyperparameter $\Nmin$. 
Just as in SPIBB for MDPs, we compute a new policy $\pii \in \Pi_{k}^\memupdate$ for the MLE-MDP $\mlemdp$ that estimates the finite-history MDP, constrained to follow the behavior policy $\pib$ used to collect $\D$ for all $(\nza) \in \sU$.

\begin{theorem}[$\zeta$-bound on history MDP]\label{th:spibb_history_mdp}
Let $\Pib$ be the set of policies under the constraint of following $\pib$ when $(\nza) \in \sU$.
Then, the policy $\pii$ computed by the SPIBB algorithm on the history MDP (Definition~\ref{def:hismdp}) is a $\zeta$-approximate safe policy improvement over the behavior policy $\pib$ with high probability $1-\delta$,
where:
\begin{equation*}
    \zeta =
    \frac{4V_{\max}}{1-\gamma} \sqrt{\frac{2}{\Nmin}\log \frac{2 |\sH||A|2^{|Z|}}{\delta}} - \rho(\pii,\mlemdp)  + \rho(\pib,\mlemdp).
\end{equation*}
\end{theorem}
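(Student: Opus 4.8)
The plan is to recognise the statement as an instance of the MDP SPIBB guarantee (Equation~\eqref{eq:spibb_zeta_bound}, the guarantee of \citealp{DBLP:conf/icml/LarocheTC19}) applied to the finite-history MDP, with only the concentration step adapted to exploit the structure of $T_H$. Although the statement names the history MDP of Definition~\ref{def:hismdp}, under Assumption~\ref{asm:1} only the finite bisimulation quotient $\sH/_\sim$ matters, so I would work on the equivalent finite-history MDP of Definition~\ref{def:fscmdp}, taking it as the true MDP $M^*$ and the estimate $\mlemdp$ of Definition~\ref{def:finhismlemdp} as its MLE-MDP. The dataset~\eqref{eq:dataset} records exactly the transitions, rewards, and next states of $M^*$ (the memory nodes are read off the FSC), so $\mlemdp$ is a genuine MLE-MDP; moreover $\pib\in\Pib$ trivially, $\pib,\pii\in\Pi_k^\memupdate$, and $\pii$ is (by construction of SPIBB) a performance maximiser in $\mlemdp$ over the $\pib$-constrained class $\Pib$. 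These are precisely the structural hypotheses used in the proof of Theorem~2 of \citet{DBLP:conf/icml/LarocheTC19}.

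The one step that is not a direct transfer is the concentration bound placing $M^*$ inside an admissible set $\Xi_e^{\mlemdp}$, and this is where the $2^{|Z|}$ factor (rather than $2^{|S|}$ or $2^{|\NZ|}$) is earned. Because the memory update $\memupdate$ is deterministic, the state reached from $\nz$ under $a$ equals $\langle\memupdate(n,z',a),z'\rangle$ and is therefore determined by the sampled observation $z'$; the map $z'\mapsto\langle\memupdate(n,z',a),z'\rangle$ is injective, so $T_H(\cdot\mid\nza)$ is a relabelling of a distribution supported on the $|Z|$-element set $Z$, and so is $\tilde T_H(\cdot\mid\nza)$ since every transition in $\D$ respects $\memupdate$. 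Hence $\|\tilde T_H(\cdot\mid\nza)-T_H(\cdot\mid\nza)\|_1$ is the $L_1$-error of an estimated distribution over $|Z|$ outcomes, and the Weissman/Maurer--Pontil inequality gives $\Pr(\|\tilde T_H(\cdot\mid\nza)-T_H(\cdot\mid\nza)\|_1\ge\epsilon)\le 2^{|Z|}e^{-\#_\D(\nza)\epsilon^2/2}$. A union bound over the at most $|\sH||A|$ state--action pairs of the finite-history MDP (bounding $|\sH/_\sim|$ by the number of histories of length at most $k$) together with the choice $e(\nza)=\sqrt{(2/\#_\D(\nza))\log(2|\sH||A|2^{|Z|}/\delta)}$ then puts $M^*$ in $\Xi_e^{\mlemdp}$ with probability at least $1-\delta$; reward-estimation error is folded into the same argument exactly as in \citet{DBLP:conf/icml/LarocheTC19}.

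Conditioned on $M^*\in\Xi_e^{\mlemdp}$, the rest is the SPIBB telescoping, which I would import unchanged: write $\rho(\pii,M^*)-\rho(\pib,M^*) = [\rho(\pii,M^*)-\rho(\pii,\mlemdp)] + [\rho(\pii,\mlemdp)-\rho(\pib,\mlemdp)] + [\rho(\pib,\mlemdp)-\rho(\pib,M^*)]$. By the simulation-lemma bound for policies in $\Pib$ (Laroche et al.), each of the first and third brackets is at most $\frac{2V_{\max}}{1-\gamma}\sqrt{(2/\Nmin)\log(2|\sH||A|2^{|Z|}/\delta)}$ in absolute value: on pairs outside $\sU$ one has $\#_\D(\nza)\ge\Nmin$, while pairs inside $\sU$ contribute through the common behaviour $\pib$ (both $\pii$ and $\pib$ act as $\pib$ there) and drop out. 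Substituting these bounds leaves $\rho(\pii,M^*)-\rho(\pib,M^*)\ge -\frac{4V_{\max}}{1-\gamma}\sqrt{(2/\Nmin)\log(2|\sH||A|2^{|Z|}/\delta)}+\rho(\pii,\mlemdp)-\rho(\pib,\mlemdp)=-\zeta$, which is the claim.

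The main obstacle is the middle paragraph: one must justify carefully that the deterministic memory update genuinely shrinks the effective support of every transition distribution and of its MLE to size $|Z|$, so that the $2^{|Z|}$ factor in the concentration inequality is valid, and that the under-sampled pairs in $\sU$ really cancel between the value gaps of $\pii$ and $\pib$---this cancellation is what lets only $\Nmin$, and not $\min_{(\nza)\notin\sU}\#_\D(\nza)$, appear in $\zeta$. A lesser point requiring care is interpreting ``$|\sH|$'' in the bound as the (finite) size of the bisimulation quotient $\sH/_\sim$, equivalently the count of histories up to length $k$, so that the logarithm is well defined.
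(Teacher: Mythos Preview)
Your proposal is correct and follows the same skeleton as the paper: apply the Weissman $L_1$ concentration bound per state--action pair, union-bound over pairs to get the $\epsilon$ in \eqref{eq:epsilon}, and then invoke the SPIBB value-gap argument of \citet{DBLP:conf/icml/LarocheTC19} (their Equation~36). Your telescoping paragraph spells out what the paper simply cites.

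There is one organisational difference worth flagging. You route the argument through the FSC-based finite-history MDP (Definition~\ref{def:fscmdp}) and Assumption~\ref{asm:1}, and earn the $2^{|Z|}$ factor from the injectivity of $z'\mapsto\langle\memupdate(n,z',a),z'\rangle$. The paper instead proves Theorem~\ref{th:spibb_history_mdp} directly on the (infinite) history MDP of Definition~\ref{def:hismdp}, where the $2^{|Z|}$ is even more immediate: from state $h$ under action $a$ the only reachable successors are the histories $haz$, one per observation $z\in Z$, so $T_\sH(\cdot\mid h,a)$ is already a distribution over $|Z|$ outcomes with no injectivity argument needed. The reduction to the finite quotient via Assumption~\ref{asm:1} is then packaged separately as Corollary~\ref{thm:zeta_finhis_mdp}. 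Your approach is not wrong---the FSC injectivity observation is valid---but it conflates the theorem with its corollary and makes the $2^{|Z|}$ step slightly heavier than it needs to be. Your closing concern about interpreting $|\sH|$ finitely is exactly what the paper defers to the corollary; for Theorem~\ref{th:spibb_history_mdp} itself the paper leaves $|\sH|$ as is.
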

The proof replaces the regular MDP from the SPIBB algorithm by the (infinite) history MDP. 
We can reduce the exponent from $|\sH|$, which would be the result of naively applying the SPIBB algorithm, to $|Z|$ because of the structure of the transition function of the history MDP.
In particular, the transition function of the history MDP is defined for histories $h$ which are appended by an action $a$ and an observation~$z$ to $haz$, see Definition~\ref{def:hismdp}.
As such, the successor states of $h$ in the history MDP are fully determined by the observation~$z$ instead of the full state-space, and thus we may replace $2^{|S|}$ from Equation~\eqref{eq:spi_n_hat} by $2^{|Z|}$.
The full proof can be found in Appendix~A.

While Theorem~\ref{th:spibb_history_mdp} and its proof reason over the full history MDP,
these results extend to the finite-history MDP when Assumption~\ref{asm:1} is satisfied.
We have the following corollary.

\begin{corollary}[$\zeta$-bound on finite-history MDP]\label{thm:zeta_finhis_mdp}
Let $\Pib$ be the set of policies under the constraint of following $\pib$ when
$(\nza) \in \sU$.
Then, the policy $\pii$ computed by the SPIBB algorithm in the finite-history MDP $M^*$ of a POMDP satisfying Assumption~\ref{asm:1} is a $\zeta$-approximate safe policy improvement over the behavior policy $\pib$ with high probability $1-\delta$,
where the admissible performance loss $\zeta$ is given by
\begin{align*}
    &\zeta =
    \frac{4V_{\max}}{1-\gamma} \sqrt{\frac{2}{\Nmin}\log \frac{2 |\NZ||A|2^{|Z|}}{\delta}} 
    \\ & \qquad \qquad \qquad \qquad \qquad - \rho(\pii,\mlemdp) + \rho(\pib,\mlemdp).
\end{align*}
\end{corollary}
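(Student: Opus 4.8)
The plan is to obtain the corollary from Theorem~\ref{th:spibb_history_mdp} by replacing the infinite history MDP with its finite bisimulation quotient — the finite-history MDP $M^*$ of Definition~\ref{def:fscmdp} — and re-running the SPIBB argument there with the reduced state-space size $|\NZ|$ in place of $|\sH|$. First I would invoke Assumption~\ref{asm:1}: by \citet{DBLP:journals/ai/GivanDG03} the history MDP then has a finite bisimulation quotient, and under the numbering of equivalence classes by FSC nodes this quotient is exactly $M^* = (\NZ, A, T_H, R_H, \gamma)$, with $|\NZ| = |\sN|\,|Z|$ states. Since bisimulation preserves the reward and lifted transition functions it preserves the value function $V_\pi$ of every policy, hence the performance $\rho$; combined with Theorem~\ref{thm:bisim} this reduces the task to proving the $\zeta$-bound with $M^*$ as the unknown true MDP, and transferring the conclusion back to $\pomdp$ at the end using $\rho(\pi,\pomdp) = \rho(\pi, M^*)$ for all $\pi \in \Pi_k^\memupdate$.

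Next I would observe that the dataset $\D$ of \eqref{eq:dataset}, collected by running the FSC $\pib$ on $\pomdp$, is a bona fide MDP-dataset for $M^*$: each tuple $\langle\nz, a_t, r_t\rangle$ records a state $\nz\in\NZ$ (the memory node is logged from the FSC, the observation drawn from $O$), an action, and a reward, so the counts $\#_\D(\nza)$ and $\#_\D(\nza,\nzprime)$ coincide with the MDP counts of Definition~\ref{def:mlemdp}. Hence the finite-history MLE-MDP $\mlemdp$ of Definition~\ref{def:finhismlemdp} is precisely the MLE-MDP of $M^*$, $\pib$ acts on $\NZ$ as the stationary policy $\pib(a\mid\nz) = \actionmap(a\mid n,z)$, and $\sU$ is the usual set of under-sampled state-action pairs. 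So the SPIBB apparatus of \citet{DBLP:conf/icml/LarocheTC19} applies to $(M^*,\mlemdp,\D,\sU)$ verbatim, with two changes to the constants: (i) the union bound over state-action pairs now ranges over $\NZ\times A$, giving the factor $|\NZ|\,|A|$; and (ii) the per-pair $L_1$-concentration exponent can be kept at $2^{|Z|}$ rather than $2^{|\NZ|}$ — exactly the reduction already used for Theorem~\ref{th:spibb_history_mdp} — because for fixed $\nz$ and $a$ the successors of $M^*$ lie in $\{\nzprime : z'\in Z,\ n' = \memupdate(n,z',a)\}$, a set of size at most $|Z|$ since $\memupdate$ is deterministic, so that transition distribution has support of size at most $|Z|$. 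Substituting $|S|\mapsto|\NZ|$ in the linear factor and $|S|\mapsto|Z|$ in the exponent of \eqref{eq:spibb_zeta_bound} then yields the stated $\zeta$. Because $\pii$ is constrained to reuse the memory update $\memupdate$ and to follow $\pib$ on $\sU$, it stays in $\Pi_k^\memupdate$, and $\rho(\pii,M^*)\ge\rho(\pib,M^*)-\zeta$ transfers to $\rho(\pii,\pomdp)\ge\rho(\pib,\pomdp)-\zeta$ by the bisimulation invariance of $\rho$.

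I expect the main obstacle to be the bisimulation bookkeeping of the first two steps rather than the concentration argument (which is identical to Theorem~\ref{th:spibb_history_mdp}). One has to argue carefully that running an FSC policy on the POMDP is \emph{exactly} a Markov chain on $\NZ$, i.e.\ that the belief $b(s\mid\nz)$ in Definition~\ref{def:fscmdp} is well defined and that $\rho(\pi,\pomdp)=\rho(\pi,M^*)$ for $\pi\in\Pi_k^\memupdate$; this is precisely where Assumption~\ref{asm:1} and Theorem~\ref{thm:bisim} are needed. A secondary subtlety is that $|\NZ|$ may overcount equivalence classes — several pairs $\nz$ can map to one class, and some pairs may be unreachable — but this only loosens the union bound, so the stated $\zeta$ remains valid (if conservative). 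Finally, as in the MDP case, one still needs $M^*\in\Xi_e^{\mlemdp}$ with probability $1-\delta$, which follows from \citep[Proposition~9]{Petrik2016} applied with the reduced support size $|Z|$.
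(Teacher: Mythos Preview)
Your proposal is correct and follows essentially the same approach as the paper: invoke Assumption~\ref{asm:1} and bisimulation (Theorem~\ref{thm:bisim}) to pass from the infinite history MDP to the finite quotient $M^*=(\NZ,A,T_H,R_H,\gamma)$, then re-run the SPIBB argument of Theorem~\ref{th:spibb_history_mdp} with $|\sH|$ replaced by $|\NZ|$ while keeping the exponent $2^{|Z|}$ because the deterministic memory update $\memupdate$ makes each transition distribution supported on at most $|Z|$ successors. The paper's own justification is a one-line appeal to Theorem~\ref{th:spibb_history_mdp} plus bisimulation equivalence; your write-up simply makes explicit the bookkeeping (that $\D$ is a valid dataset for $M^*$, that $\rho(\cdot,\pomdp)=\rho(\cdot,M^*)$ on $\Pi_k^\memupdate$, and that overcounting in $|\NZ|$ only loosens the union bound) which the paper leaves implicit.
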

Since bisimilarity is an equivalence relation, the finite-history MDP is equivalent to the full history MDP, and thus also the POMDP, see Theorem~\ref{thm:bisim}. 
As a consequence, the proof of Corollary~\ref{thm:zeta_finhis_mdp} follows immediately from Theorem~\ref{th:spibb_history_mdp} and the fact that bisimulation is an equivalence relation.

\begin{figure}[tbp]
    \centering
    \includegraphics[width=.33\columnwidth]{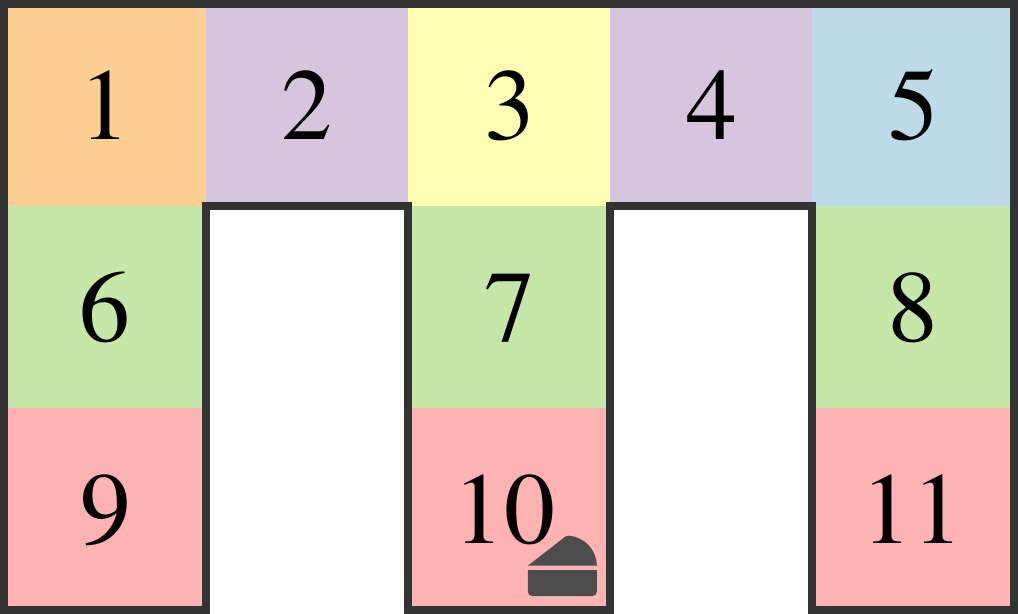}
    \caption{The Maze environment. The locations are colored according to the agent's perception.}
    \label{fig:cheese}
\end{figure}

\section{Empirical Analysis}

This section contains the empirical evaluation of our approach to SPI on POMDPs. 
We first describe the setup of the experiments, and then present and analyze the results.
We provide further details in Appendix B and code at \texttt{https://github.com/LAVA-LAB/spi\_pomdp}.

\begin{figure*}[t]
    \centering
    \begin{minipage}{.87\textwidth}
    \includegraphics[width=.24\textwidth]{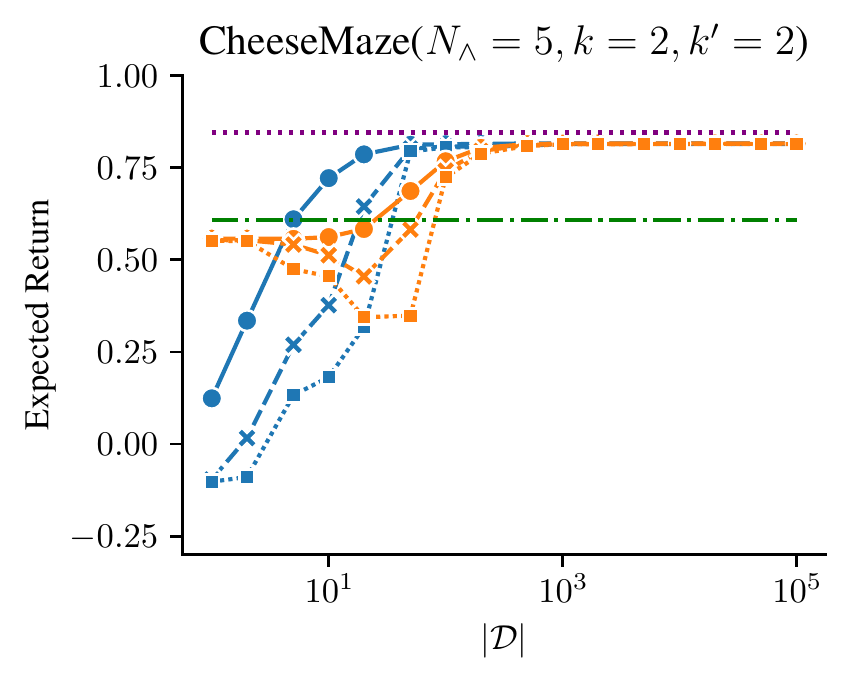} \includegraphics[width=.24\textwidth]{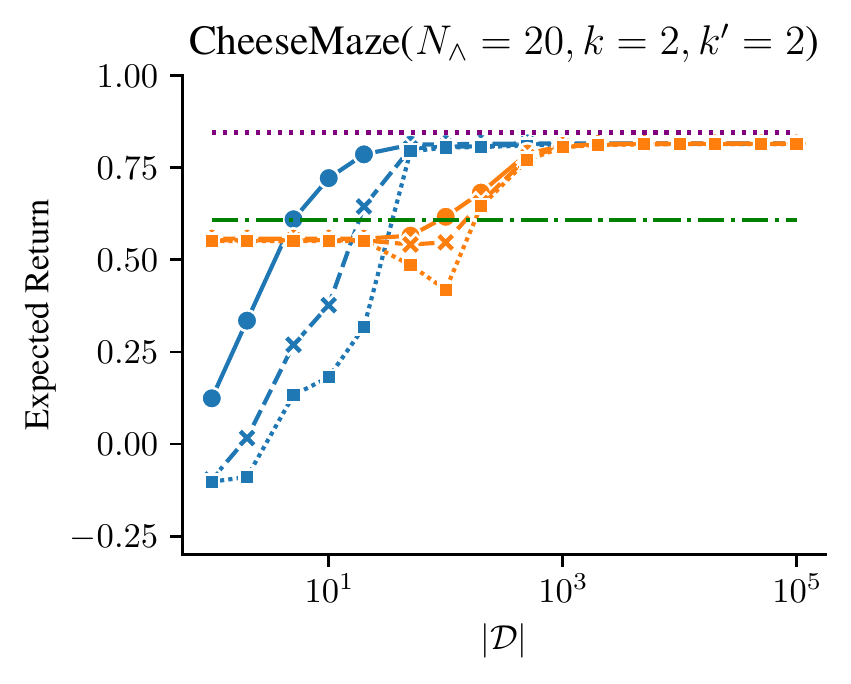}
    \includegraphics[width=.24\textwidth]{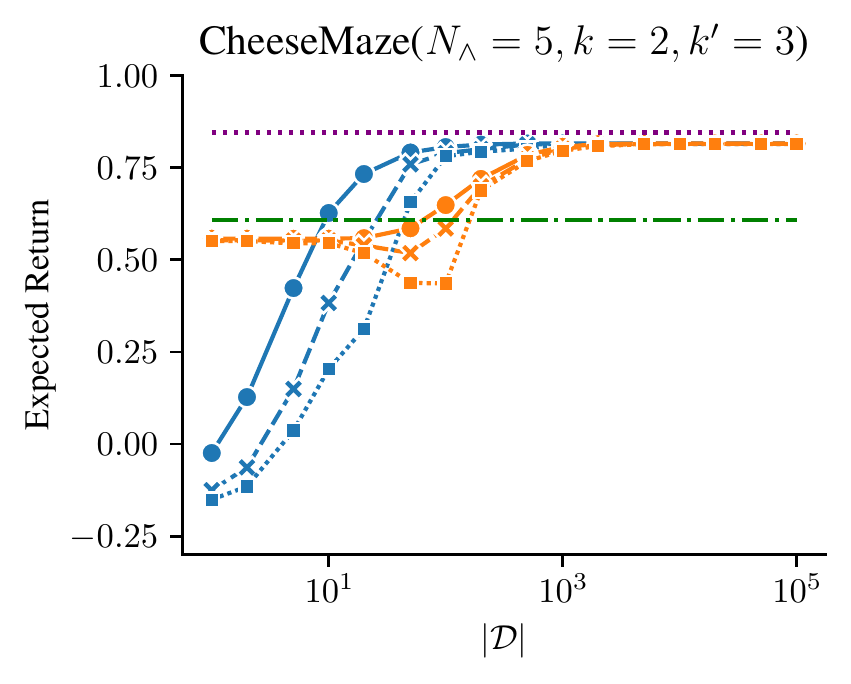}
    \includegraphics[width=.24\textwidth]{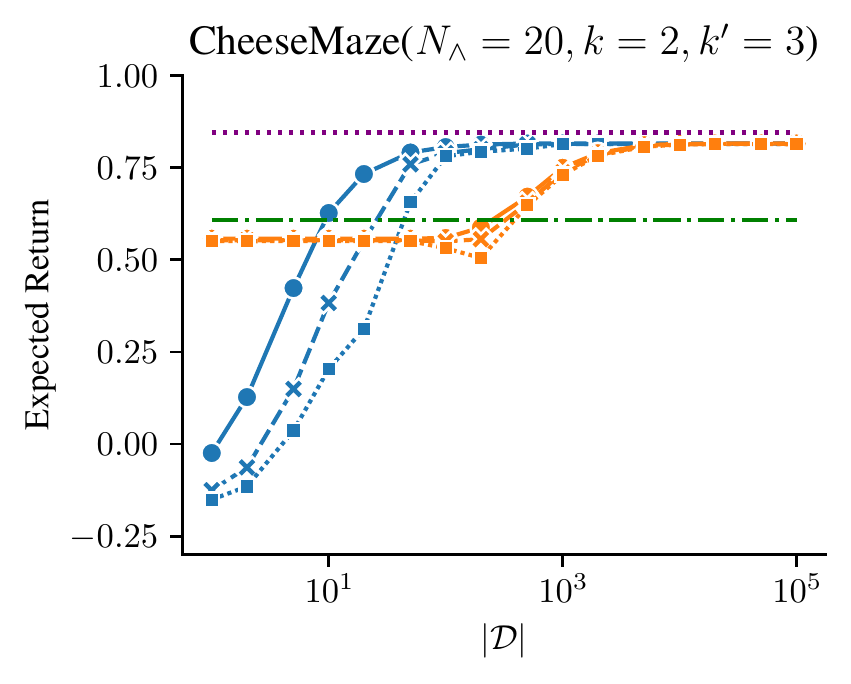}\\
    \includegraphics[width=.24\textwidth]{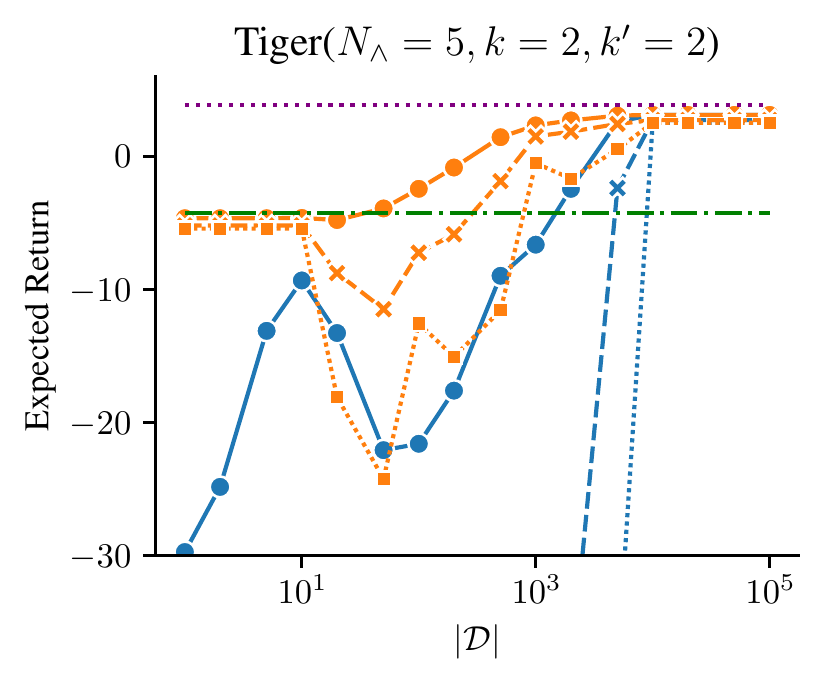} 
    \includegraphics[width=.24\textwidth]{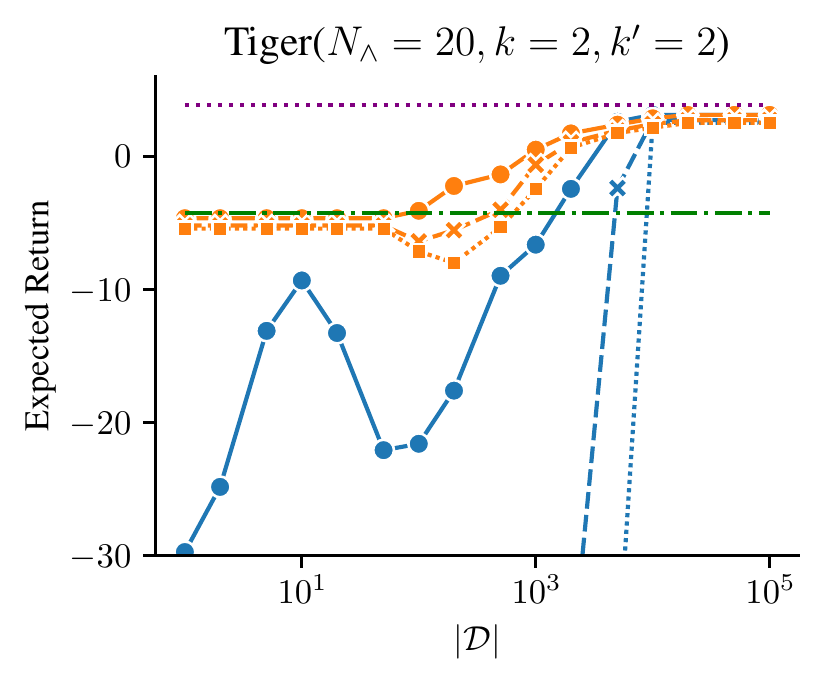}
    \includegraphics[width=.24\textwidth]{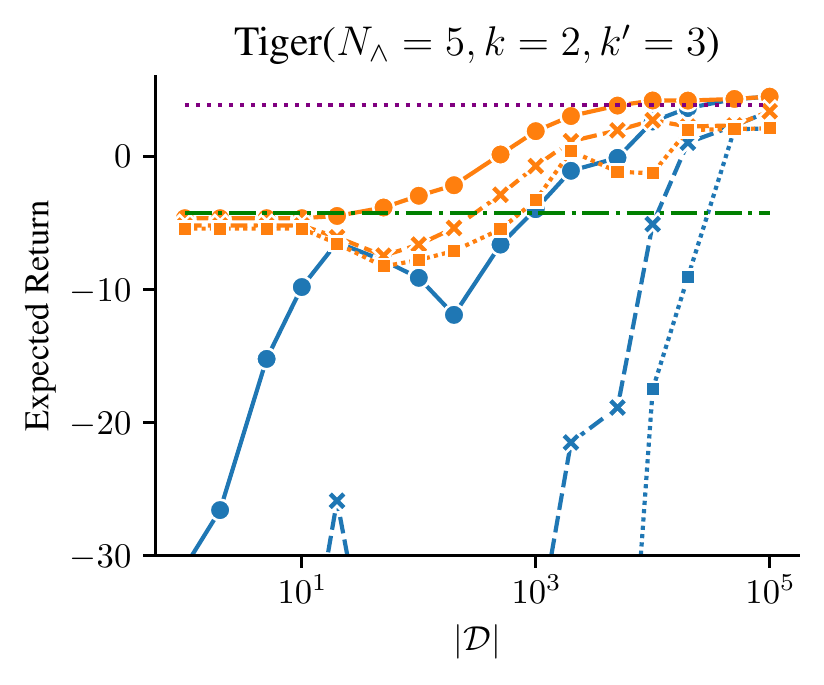} 
    \includegraphics[width=.24\textwidth]{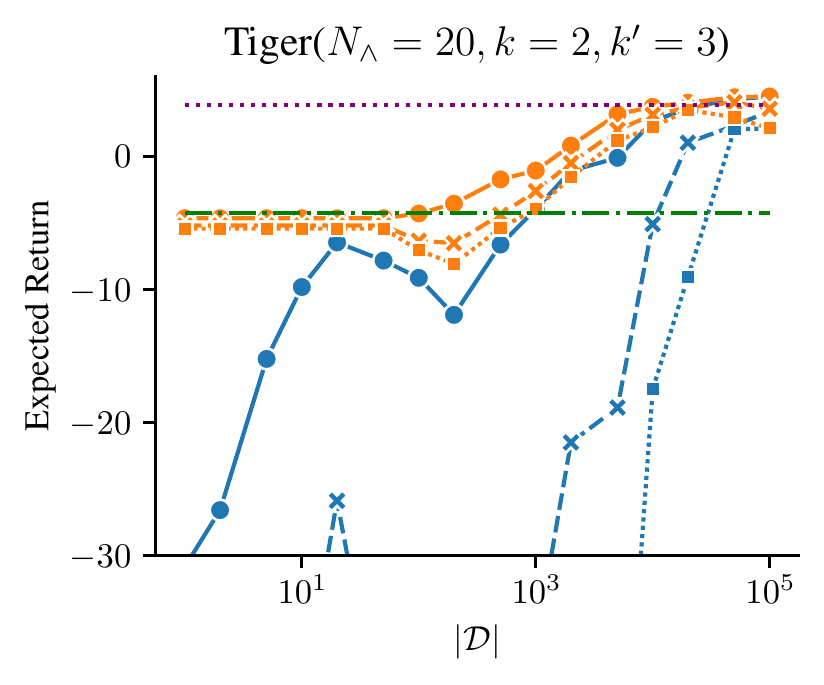} \\
    \includegraphics[width=.24\textwidth]{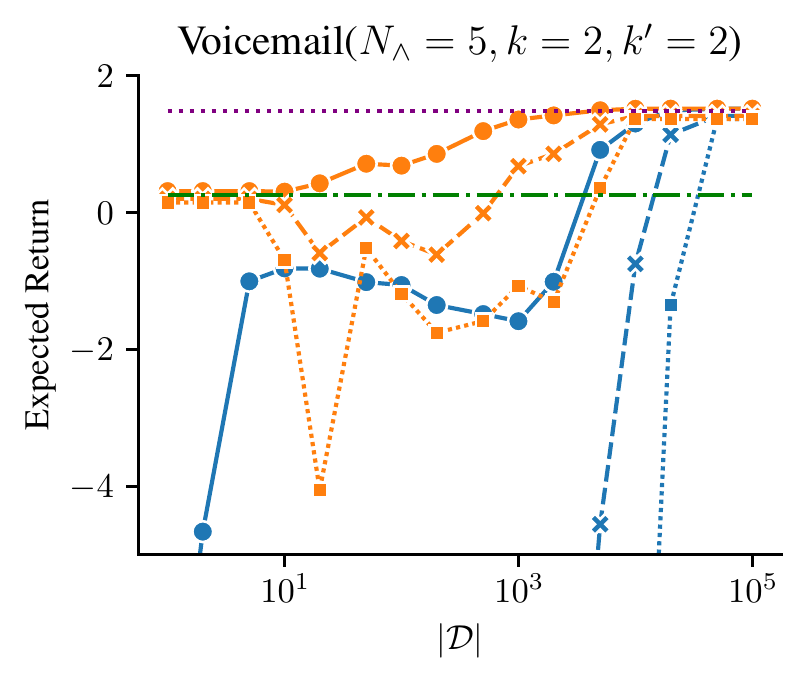}  
    \includegraphics[width=.24\textwidth]{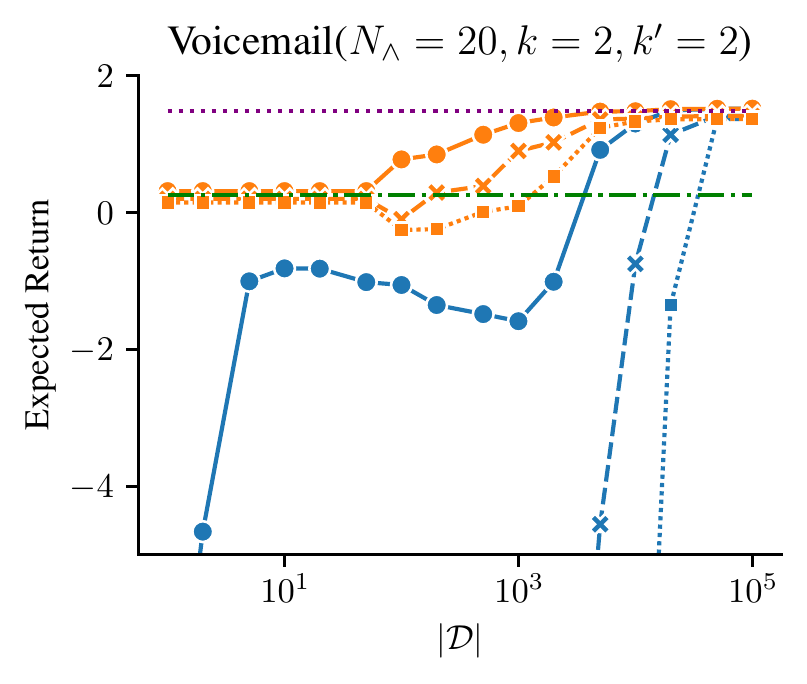} 
    \includegraphics[width=.24\textwidth]{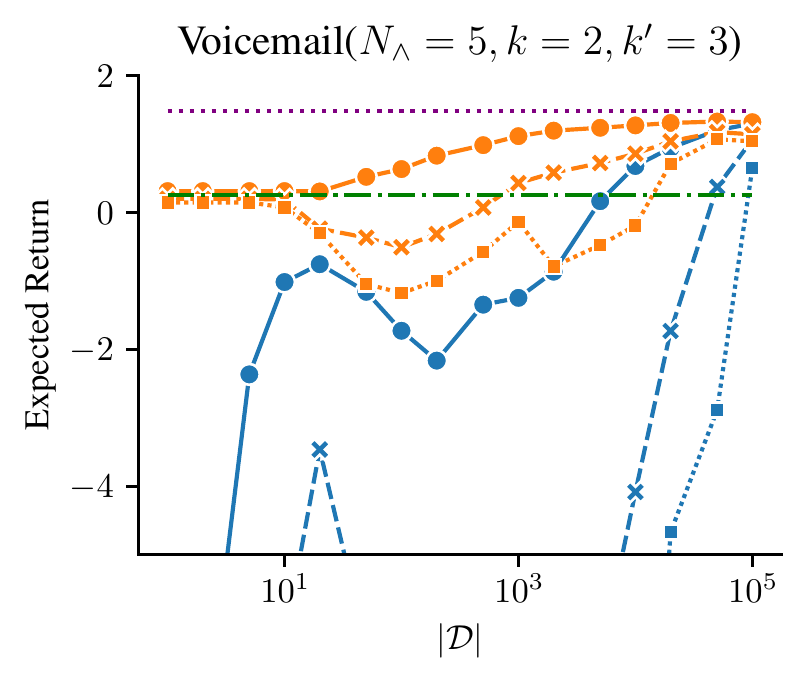}
    \includegraphics[width=.24\textwidth]{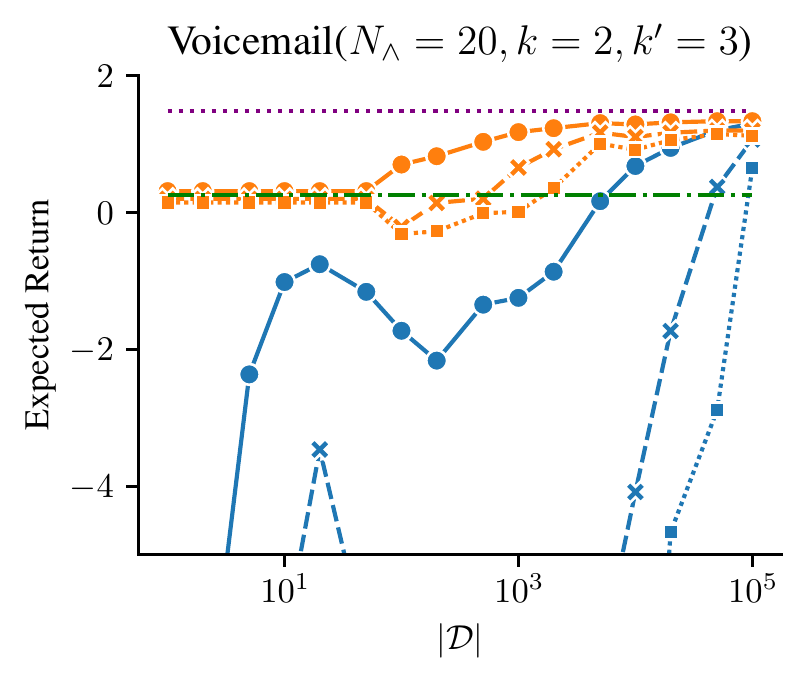}
    \end{minipage}
    \begin{minipage}{.12\textwidth}
    \includegraphics[width=\textwidth]{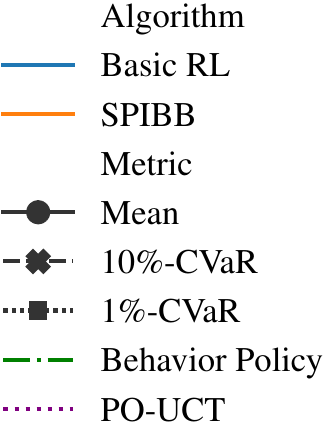} 
    \end{minipage}
    \caption{Policy improvement on the environments Maze, Tiger and Voicemail (first, second and third row, respectively) for datasets collected by a behavior policy with history size $k=2$, varying the hyperparameters pairs column-wise: $(\Nmin = 5, k' = 2)$, $(\Nmin = 20, k' = 2)$,  $(\Nmin = 5, k' = 3)$, and $(\Nmin = 20, k' = 3)$.
    The plots show the mean (solid line), $10\%$-CVaR (dashed line) and $1\%$-CVaR (dotted line). 
    The performance of the behavior policy is shown in green (dash-dotted line).}
    \label{fig:performance}
\end{figure*}

\subsection{Setup}
\paragraph{Environments.}
We consider three POMDP problems:
\begin{enumerate}
    \item CheeseMaze~\citep{DBLP:conf/icml/McCallum93}: An agent navigates a maze, moving %
    in the four cardinal directions, 
    but in each state it only perceives whether or not there is a barrier in each direction (see Figure~\ref{fig:cheese}).
    The agent is placed at a random location at the beginning of an episode, and receives a positive reward~($+1$) if it reaches the goal~(\faIcon{cheese}), and a small negative reward ($-0.01$) otherwise.
    The episode ends when the agent reaches the goal.
    \item Tiger~\citep{DBLP:journals/ai/KaelblingLC98}: An agent is in front of two doors, and a tiger is randomly positioned behind one of them at the beginning of each episode.
    The agent has three actions: 
    Listening, or opening one of the doors.
    Listening gives a noisy observation of the position of the tiger, and a small negative reward~($-1$).
    Opening the door with the tiger gives a large negative reward~($-100$), while opening the other door gives a positive reward~($+10$).
    \item Voicemail~\citep{DBLP:journals/csl/WilliamsY07}:
    An agent controls a voicemail machine,
    at the beginning of the episode the user listens to a message and decides if they want to keep it. 
    This information is hidden from the agent, which
    has three actions: \emph{ask}, \emph{save}, and \emph{delete}.
    Asking the user if they want to keep the message gives the agent a small negative reward~($-1$) and a noisy observation of the user's intention.
    Correctly saving the message gives a positive reward~($+5$), and a negative reward~($-10$) otherwise.
    Correctly deleting the message gives a positive reward~($+5$), and a negative reward~($-20$) otherwise.
\end{enumerate}
\paragraph{Satisfaction of Assumption~\ref{asm:1}.} Note that the Maze environment is close to satisfying Assumption~\ref{asm:1} for memory that looks back two steps, \ie, $k=2$, with the exceptions of histories with equal observations.
Tiger and Voicemail do not satisfy the assumption for any $k$.

\paragraph{Behavior policies.}
The behavior policies are generated via Q-learning using the memory of an FSC that keeps track of the last $k \in \{1,2\}$ observations as the state.
After convergence, we extract a softmax policy, to ensure the behavior policy explores different actions during data collection.

\paragraph{Data collection.}
We consider datasets of different sizes, namely: $1$, $2$, $5$, $10$, $20$, $50$, $\cdots$, $5\,000$, and $10\,000$ trajectories.
For each environment, number of trajectories, and behavior policy, we generate $500$ datasets.

\paragraph{Learning.}
We consider two algorithms to compute a new policy: SPIBB, and Basic RL.
Both algorithms operate on the finite-history MLE-MDP~(Definition~\ref{def:finhismlemdp}) related to the finite-history MDP of the POMDP.
We implement Basic RL as an unconstrained SPIBB where $\Nmin = 0$, that is, it solves the MLE-MDP using value iteration.
For each dataset, we compute new policies $\pii$ using each offline RL algorithm, considering different hyperparameters: $\Nmin \in \{5,7,10,15,20,30,50,70,100\}$ and $k' \in \{k, k+1\}$, where $k'$ is the history size encoded in the FSC of $\pii$.

\paragraph{Evaluation metrics.}
Each policy is evaluated over $10\,000$ episodes to obtain an estimate of the performance of the improved policy, $\rho(\pii, M^*)$.
For an evaluation across environments and behavior policies, we also consider the normalized policy improvement:
$$
    \bar{\rho}(\pii) = \frac{\rho(\pii, M^*) - \rho(\pib, M^*)}{\rho(\pi_{\max}, M^*) - \rho(\pib, M^*)},
$$
where $\pi_{\max}$ is the policy with the highest expected return in each environment.
To aggregate the results across the $500$ repetitions, we compute the mean and Conditional Value at Risk~\citep[CVaR;][]{Rockafellar2000}.
We use $x\%$-CVaR to indicate the mean of the $x\%$ lowest performances.
As an approximation of the optimal value, we show the performance of PO-UCT \citep{DBLP:conf/nips/SilverV10}, which uses the environment as a simulator to compute a policy.

\begin{figure*}[tbp]
    \centering
    \begin{minipage}{.87\textwidth}
    \includegraphics[width=.24\textwidth]{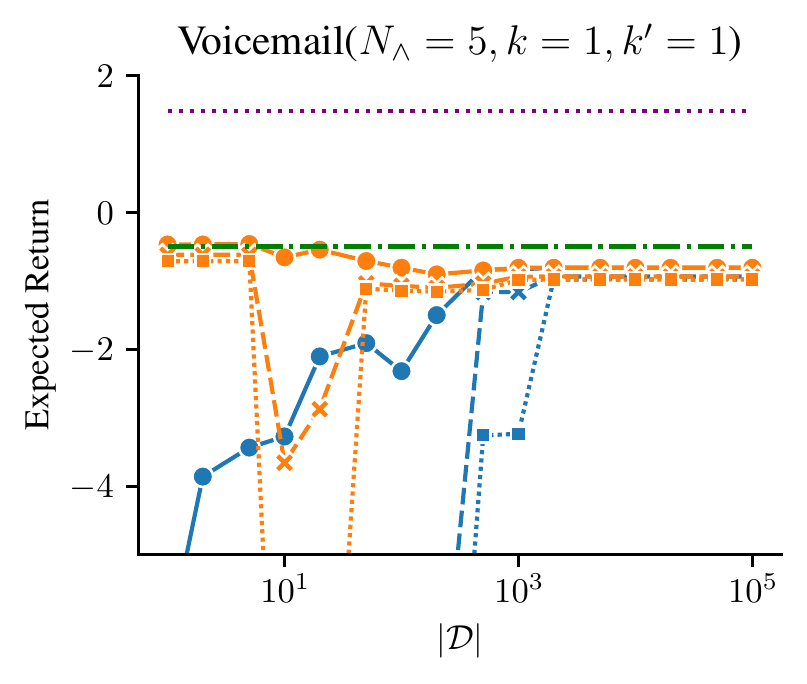}  
    \includegraphics[width=.24\textwidth]{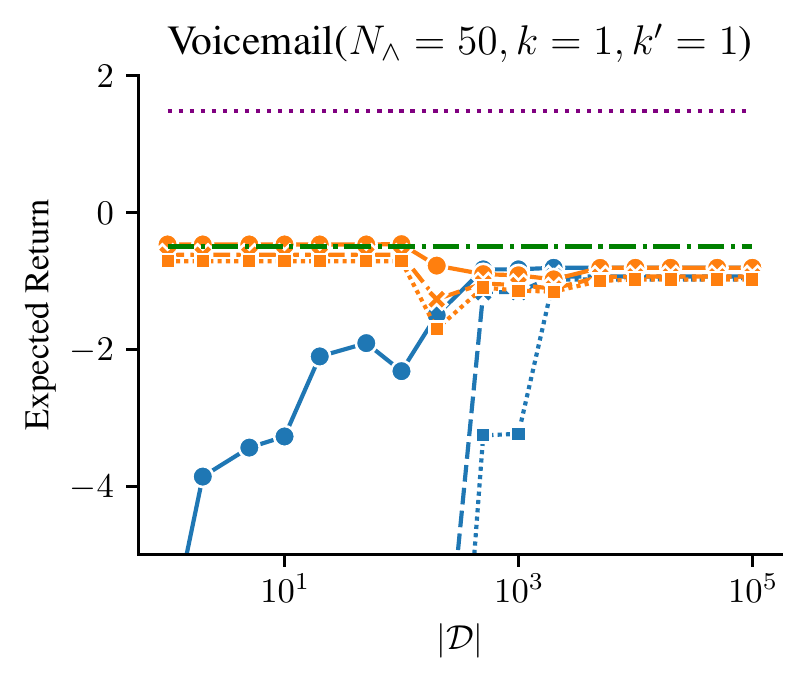} 
    \includegraphics[width=.24\textwidth]{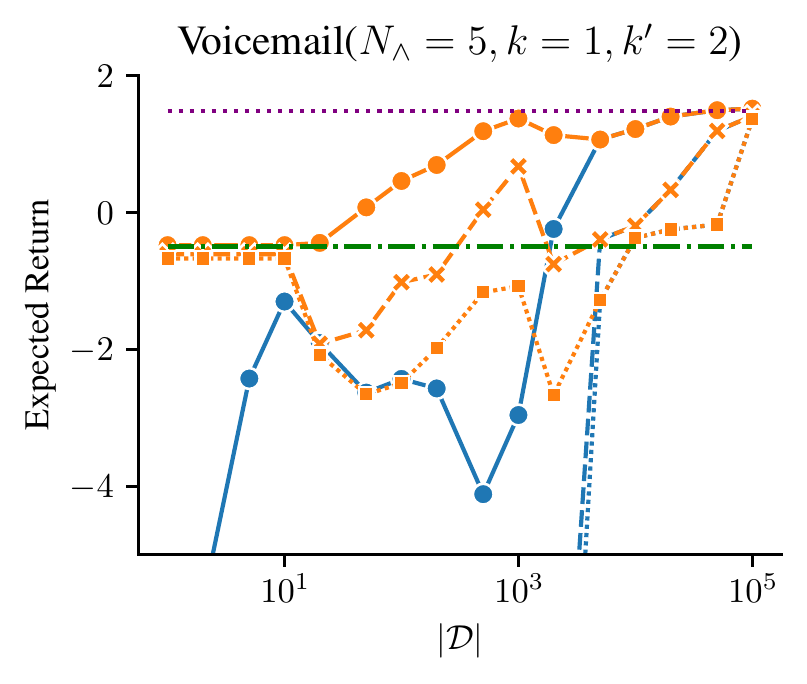} 
    \includegraphics[width=.24\textwidth]{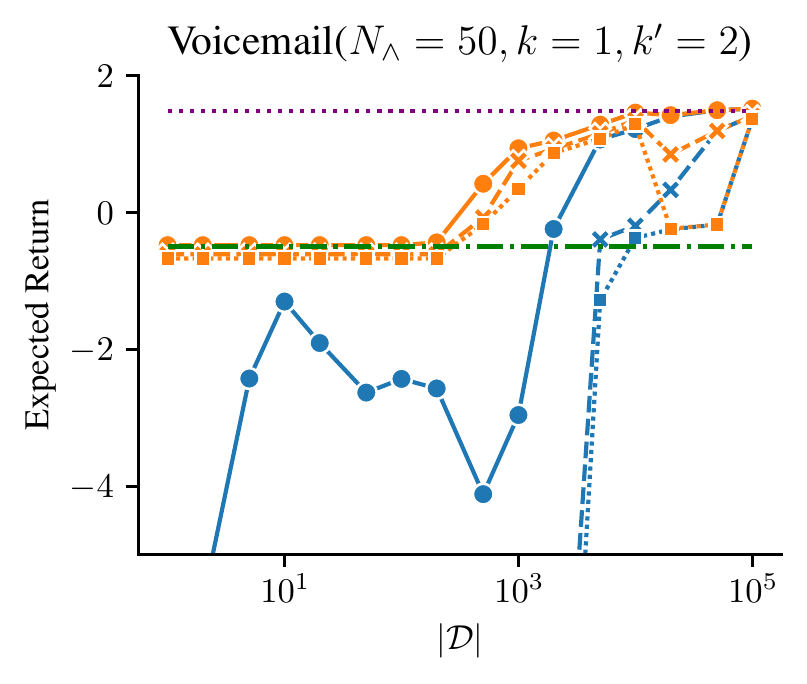}
    \end{minipage}
    \begin{minipage}{.12\textwidth}
    \includegraphics[width=\textwidth]{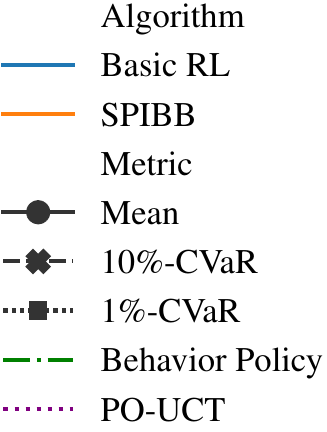} 
    \end{minipage}
    \caption{Policy improvement on the Voicemail environment for datasets collected with a memoryless policy ($k=1$), varying the hyperparameters pairs column-wise: $(\Nmin = 5, k' = 1)$, $(\Nmin = 50, k' = 1)$,  $(\Nmin = 5, k' = 2)$, and $(\Nmin = 50, k' = 2)$. The plots show the mean (solid line), $10\%$-CVaR (dashed line) and $1\%$-CVaR (dotted line). The performance of the behavior policy is shown in green (dash-dotted line).}
    \label{fig:performance_voicemail_memoryless}
\end{figure*}

\subsection{Results}
Figure~\ref{fig:performance} shows results on the three environments (ordered by row).
The data was collected using a behavior policy with $k=2$.
The first column shows the results where SPIBB uses a low threshold to consider a history-action pair known and the same memory size as the behavior policy ($\Nmin = 5$ and $k' = 2$).
The second column shows the results with a higher threshold ($\Nmin=20$ and $k'=2$).
The third column shows the results for increased memory ($\Nmin=5$ and $k'=3$).
Finally, the fourth column shows the results with a higher threshold and increased memory ($\Nmin=20$ and $k'=3$).
Basic RL is included everywhere to give a perspective on the influence of different hyperparameters.

Figures~\ref{fig:performance_voicemail_memoryless} and~\ref{fig:tiger_heatmap_memoryless} extend the empirical analysis on the Voicemail and Tiger environments for memoryless behavior policy ($k=1$), since they demonstrated to be more challenging for the safe policy improvement problem.
Figure~\ref{fig:performance_voicemail_memoryless} considers the Voicemail environment, while
Figure~\ref{fig:tiger_heatmap_memoryless} shows the normalized results for a range of thresholds in the Tiger environment.
We provide further results in the Appendix C.

\subsection{Analysis}

\paragraph{Basic RL is unreliable.}
Across all environments, the Basic RL algorithm shows a considerable performance drop compared to the behavior policy, even in terms of the mean performance for smaller datasets.
Notice that for Tiger and Voicemail, the CVaR metrics are often outside the graph.

\paragraph{SPIBB outperforms Basic RL.}
In the environments Tiger and Voicemail (Figure~\ref{fig:performance}, second and third row), the SPIBB algorithm shows better performance than the Basic RL across all dataset sizes.
This is likely due to the SPIBB algorithm retaining the randomization of the behavior policy when insufficient data is available.

\paragraph{SPIBB is reliable when Assumption~\ref{asm:1} is satisfied.}
Analyzing the results for the Maze environment (Figure~\ref{fig:performance}, first row), we observe that SPIBB shows reliably outperforms the behavior policy even for a small $\Nmin$ (first column), for which only the $1\%$-CVaR shows a performance drop.

\newcommand{\heatmapdescription}{
The left, middle and right columns show the mean, $10\%$-CVaR and $1\%$-CVaR, respectively.
The first row shows the results where the improved policy uses the same memory as the behavior policy ($k' = k$), while the second row shows the results for an improved policy with more memory ($k' = k + 1$).
}

\begin{figure}
    \centering
    \begin{minipage}{.92\columnwidth}
    \includegraphics[width=.32\textwidth]{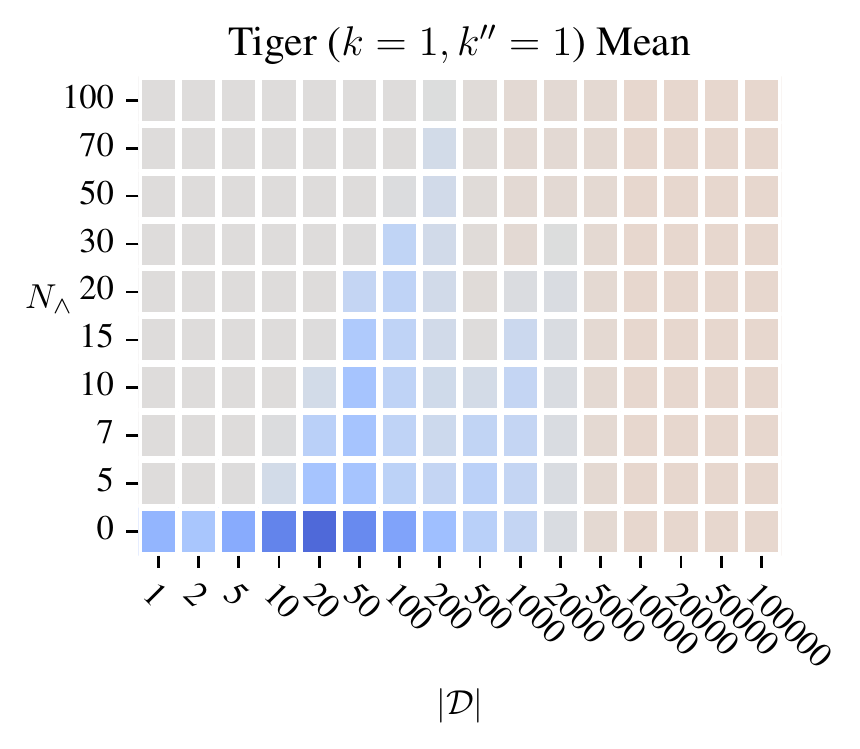}   
    \includegraphics[width=.32\textwidth]{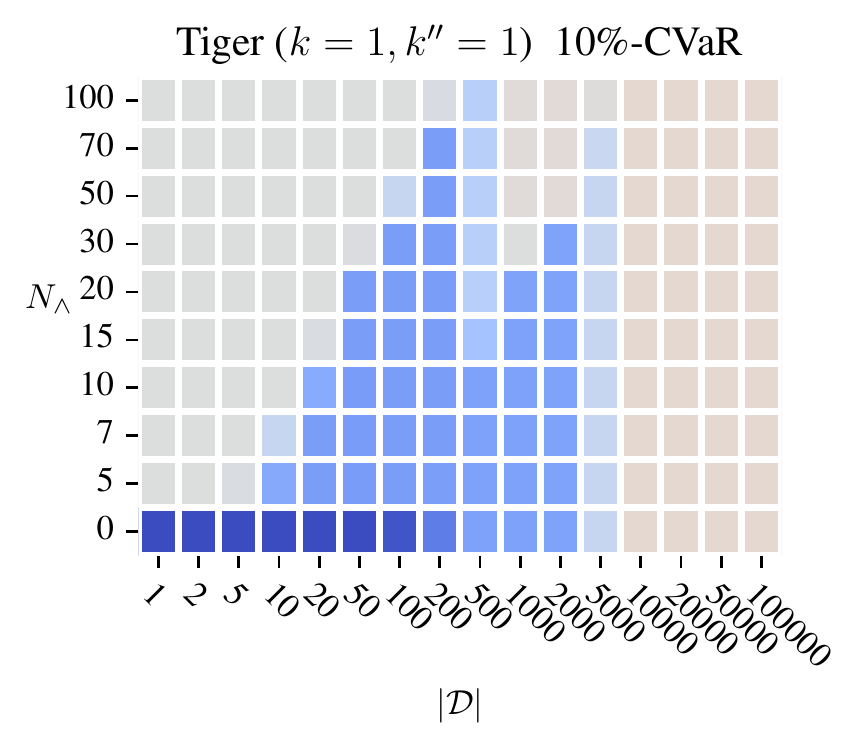} \includegraphics[width=.32\textwidth]{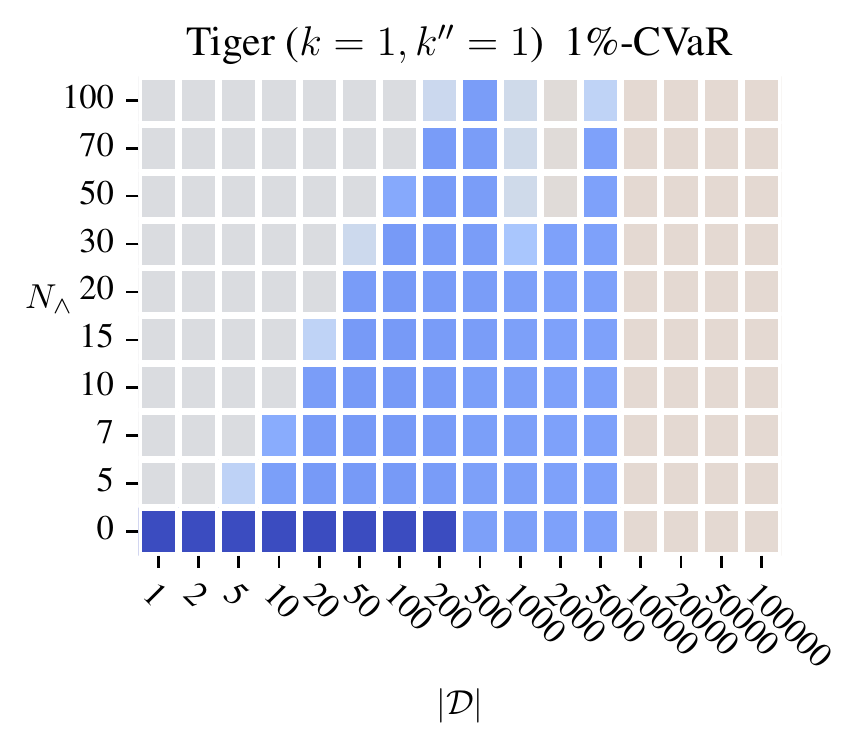}\\
    \includegraphics[width=.32\textwidth]{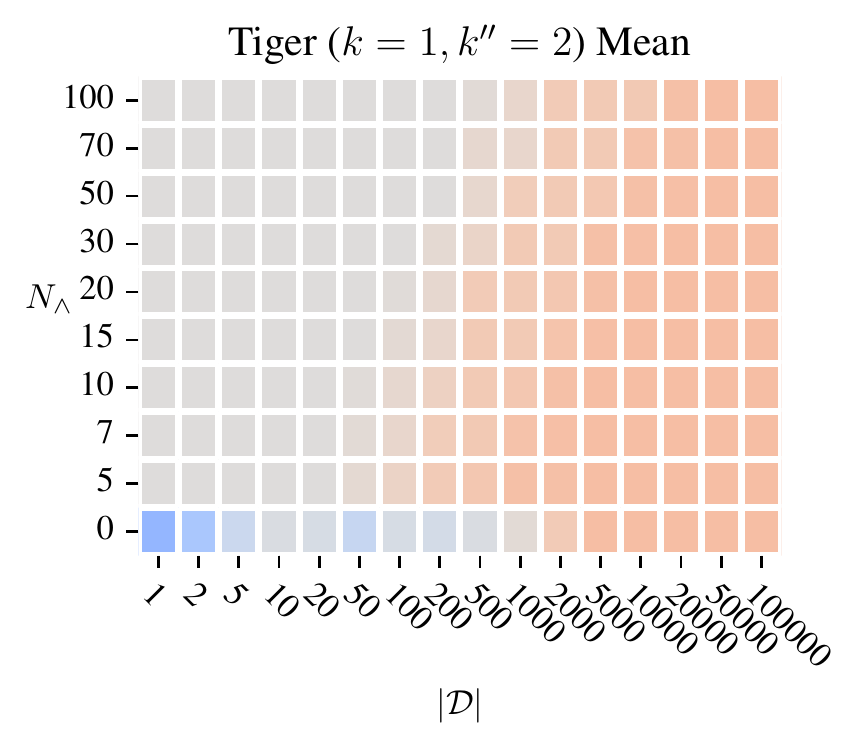}
    \includegraphics[width=.32\textwidth]{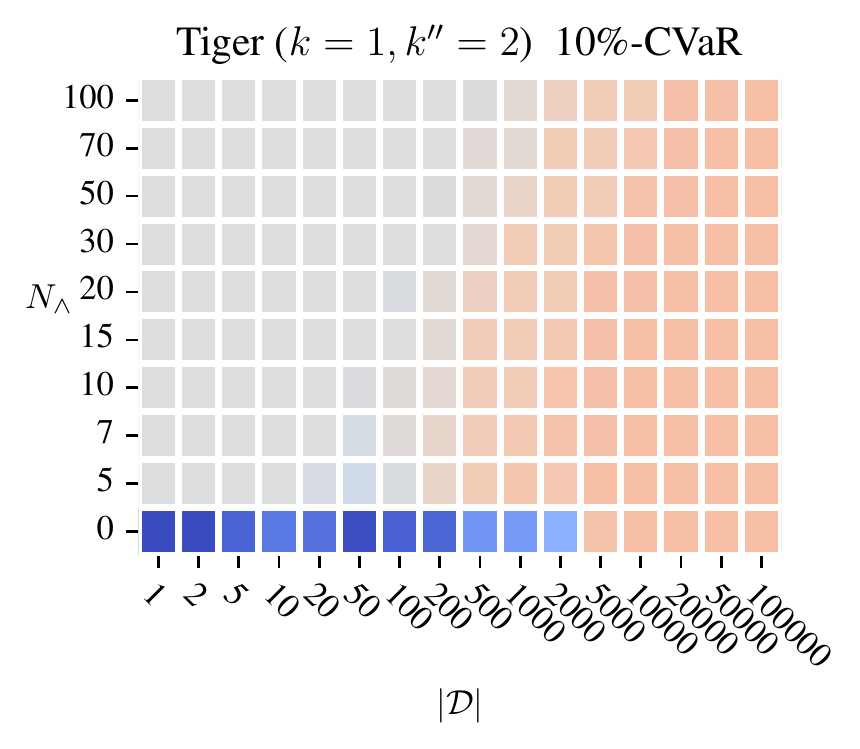}
    \includegraphics[width=.32\textwidth]{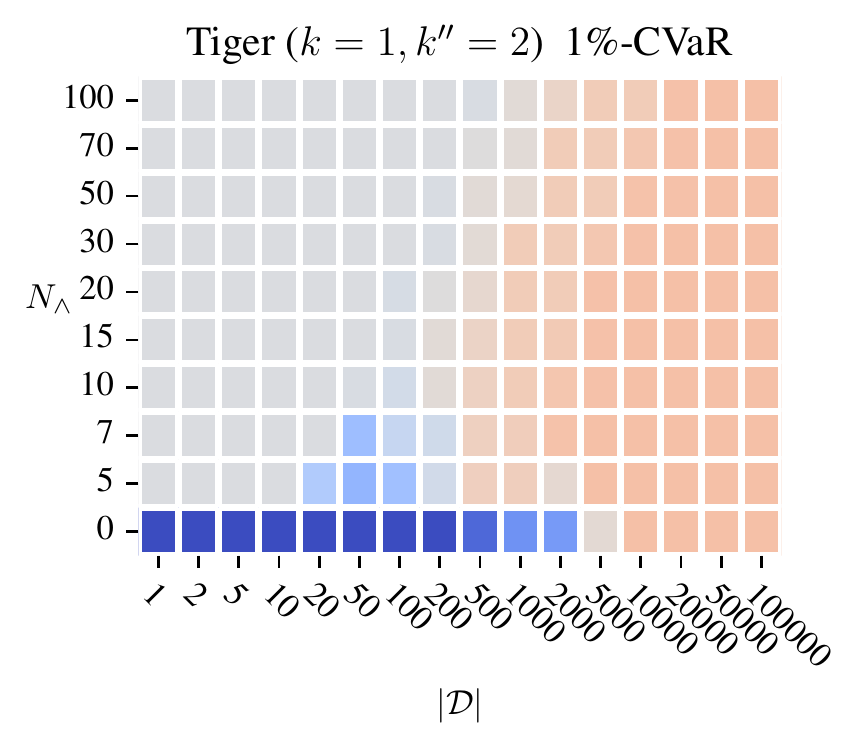}
    \end{minipage}
    \begin{minipage}{.07\columnwidth}
    \includegraphics[width=\textwidth]{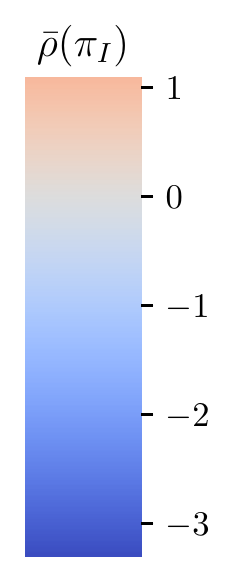}
    \end{minipage}
    \caption{Normalized performance $\bar{\rho}(\pii)$ on the Tiger environment ($k=1$). \heatmapdescription{}
    }
    \label{fig:tiger_heatmap_memoryless}
\end{figure}

\paragraph{More memory improves the reliability.}
SPIBB shows slightly unreliable behavior for small values of $\Nmin$ in the Tiger and Voicemail environments (Figure~\ref{fig:performance}), as evidenced by both the CVaR curves, which can be alleviated by increasing the $\Nmin$ or the memory of the new policy (second, third and fourth column).
When Assumption~\ref{asm:1} is violated, the performance drop may be significant, as seen in the first two columns of Figure~\ref{fig:performance_voicemail_memoryless}.
In this case, merely increasing the $\Nmin$ threshold is not enough to guarantee a policy improvement.
Increasing the memory size, however, allows the SPIBB algorithm to reliably improve the behavior policy, as Figure~\ref{fig:performance_voicemail_memoryless} (last column) and Figure~\ref{fig:tiger_heatmap_memoryless} (second row) show.

\paragraph{Deterministic policies may require more memory.}
Figure~\ref{fig:performance_voicemail_memoryless} shows an interesting phenomenon.
In partially observable settings, the stochastic behavior policy might perform better than the new deterministic policy, since randomization can trade-off some amount of memory.
We observe that when $k=1$, SPIBB and Basic RL converge to deterministic policies with an expected return lower than the behavior policy.
When SPIBB has sufficient data, it is not constrained to follow the behavior policy, and thus does not inherit any randomization from that policy.
As stated in the previous paragraph, more memory can then yield a new deterministic policy with a higher return than the behavior policy.

\section{Related Work}

Offline RL, also known as batch RL, learns or evaluates a policy from a fixed batch of historical data~\citep{DBLP:journals/corr/abs-2005-01643}.
Overall, these algorithms rely on pessimism to 
mitigate the lack of feedback from the environment
and can be divided in two categories~\citep{Jin2021}:
those that constrain the final policy to stay close to the behavior policy~\citep{DBLP:conf/icml/LarocheTC19}, and those that penalize rare experiences~\citep{Petrik2016}.
Our method belongs to the first category.

Various extensions of SPIBB could be adapted for POMDPs, such as 
    soft-SPIBB~\citep{DBLP:conf/pkdd/NadjahiLC19,DBLP:conf/icaart/SchollDOU22},
    deep-SPIBB~\citep{Brandfonbrener2022}, and 
    factored-SPIBB~\citep{DBLP:conf/aaai/SimaoS19,DBLP:conf/ijcai/SimaoS19}.
SPI has also been studied without the behavior policy~\citep{DBLP:conf/atal/SimaoLC20} and for multi-objective~\citep{DBLP:conf/nips/SatijaTPL21} and non-stationary settings~\citep{DBLP:conf/nips/ChandakJTWT20}.

When the behavior policy is influenced by unobserved variables, we may come across confounding variables.
The problem of evaluating a policy offline was studied in this setting, for instance, assuming that observed and unobserved variables are decoupled~\citep{Tennenholtz2020}, or that the influence of the confounding variable on the behavior policy is limited~\citep{Namkoong2020}.
Since we assume that the behavior policy only depends on the observed history, we have no confounding variables.

\section{Conclusions}
We presented a new approach to safe policy improvement in POMDPs.
Our experiments show the applicability of the approach, even in cases where finite-history is not sufficient to obtain optimal results.
In the future, it would be interesting to relax Assumption~\ref{asm:1} to distance metrics~\citep{DBLP:conf/uai/FernsPP04,DBLP:conf/uai/FernsPP05} instead of exact bisimilarity.

\nocite{Weissman2003}

\section*{Acknowledgments}
We would like to thank Alberto Castellini, Alessandro Farinelli, Matthijs Spaan, and  Edoardo Zorzi for discussions on related topics.
We also thank Maris Galesloot for help with the implementation of the PO-UCT algorithm.
This research has been partially funded by NWO grants OCENW.KLEIN.187 (Provably Correct Policies for Uncertain Partially Observable Markov Decision Processes) and NWA.1160.18.238 (PrimaVera).

\bibliography{references}

\newcommand{\arXiv}[1]{\emph{arXiv preprint arXiv:#1}}
\begin{thebibliography}{41}
\providecommand{\natexlab}[1]{#1}

\bibitem[{Amato, Bernstein, and
  Zilberstein(2010)}]{DBLP:journals/aamas/AmatoBZ10}
Amato, C.; Bernstein, D.~S.; and Zilberstein, S. 2010.
\newblock Optimizing fixed-size stochastic controllers for POMDPs and
  decentralized POMDPs.
\newblock \emph{Auton. Agents Multi Agent Syst.}, 21(3): 293--320.

\bibitem[{{\AA}str{\"o}m(1965)}]{aastrom1965optimal}
{\AA}str{\"o}m, K.~J. 1965.
\newblock Optimal control of Markov processes with incomplete state
  information.
\newblock \emph{Journal of mathematical analysis and applications}, 10(1):
  174--205.

\bibitem[{Bonet(2002)}]{DBLP:conf/icml/Bonet02}
Bonet, B. 2002.
\newblock An epsilon-Optimal Grid-Based Algorithm for Partially Observable
  Markov Decision Processes.
\newblock In \emph{{ICML}}, 51--58. Morgan Kaufmann.

\bibitem[{Brandfonbrener, des Combes, and Laroche(2022)}]{Brandfonbrener2022}
Brandfonbrener, D.; des Combes, R.~T.; and Laroche, R. 2022.
\newblock Incorporating Explicit Uncertainty Estimates into Deep Offline
  Reinforcement Learning.
\newblock \arXiv{2206.01085}.

\bibitem[{Chandak et~al.(2020)Chandak, Jordan, Theocharous, White, and
  Thomas}]{DBLP:conf/nips/ChandakJTWT20}
Chandak, Y.; Jordan, S.~M.; Theocharous, G.; White, M.; and Thomas, P.~S. 2020.
\newblock Towards Safe Policy Improvement for Non-Stationary MDPs.
\newblock In \emph{{NeurIPS}}, 9156--9168.

\bibitem[{Cheng et~al.(2022)Cheng, Xie, Jiang, and
  Agarwal}]{DBLP:conf/icml/ChengX0A22}
Cheng, C.; Xie, T.; Jiang, N.; and Agarwal, A. 2022.
\newblock Adversarially Trained Actor Critic for Offline Reinforcement
  Learning.
\newblock In \emph{{ICML}}, volume 162, 3852--3878. {PMLR}.

\bibitem[{Cubuktepe et~al.(2021)Cubuktepe, Jansen, Junges, Marandi, Suilen, and
  Topcu}]{DBLP:conf/aaai/Cubuktepe0JMST21}
Cubuktepe, M.; Jansen, N.; Junges, S.; Marandi, A.; Suilen, M.; and Topcu, U.
  2021.
\newblock Robust Finite-State Controllers for Uncertain POMDPs.
\newblock In \emph{{AAAI}}, 11792--11800. {AAAI} Press.

\bibitem[{Dujardin, Dietterich, and Chades(2017)}]{DBLP:conf/aaai/DujardinDC17}
Dujardin, Y.; Dietterich, T.; and Chades, I. 2017.
\newblock Three New Algorithms to Solve N-POMDPs.
\newblock In \emph{{AAAI}}, 4495--4501. {AAAI} Press.

\bibitem[{Ferns, Panangaden, and Precup(2004)}]{DBLP:conf/uai/FernsPP04}
Ferns, N.; Panangaden, P.; and Precup, D. 2004.
\newblock Metrics for Finite Markov Decision Processes.
\newblock In \emph{{UAI}}, 162--169. {AUAI} Press.

\bibitem[{Ferns, Panangaden, and Precup(2005)}]{DBLP:conf/uai/FernsPP05}
Ferns, N.; Panangaden, P.; and Precup, D. 2005.
\newblock Metrics for Markov Decision Processes with Infinite State Spaces.
\newblock In \emph{{UAI}}, 201--208. {AUAI} Press.

\bibitem[{Givan, Dean, and Greig(2003)}]{DBLP:journals/ai/GivanDG03}
Givan, R.; Dean, T.~L.; and Greig, M. 2003.
\newblock Equivalence notions and model minimization in Markov decision
  processes.
\newblock \emph{Artif. Intell.}, 147(1-2): 163--223.

\bibitem[{Jin, Yang, and Wang(2021)}]{Jin2021}
Jin, Y.; Yang, Z.; and Wang, Z. 2021.
\newblock Is Pessimism Provably Efficient for Offline RL?
\newblock In \emph{{ICML}}, volume 139, 5084--5096. {PMLR}.

\bibitem[{Junges et~al.(2018)Junges, Jansen, Wimmer, Quatmann, Winterer,
  Katoen, and Becker}]{DBLP:conf/uai/Junges0WQWK018}
Junges, S.; Jansen, N.; Wimmer, R.; Quatmann, T.; Winterer, L.; Katoen, J.; and
  Becker, B. 2018.
\newblock Finite-State Controllers of POMDPs using Parameter Synthesis.
\newblock In \emph{{UAI}}, 519--529. {AUAI} Press.

\bibitem[{Kaelbling, Littman, and
  Cassandra(1998)}]{DBLP:journals/ai/KaelblingLC98}
Kaelbling, L.~P.; Littman, M.~L.; and Cassandra, A.~R. 1998.
\newblock Planning and Acting in Partially Observable Stochastic Domains.
\newblock \emph{Artif. Intell.}, 101(1-2): 99--134.

\bibitem[{Kaelbling, Littman, and
  Moore(1996)}]{DBLP:journals/jair/KaelblingLM96}
Kaelbling, L.~P.; Littman, M.~L.; and Moore, A.~W. 1996.
\newblock Reinforcement Learning: {A} Survey.
\newblock \emph{J. Artif. Intell. Res.}, 4: 237--285.

\bibitem[{Kochenderfer(2015)}]{kochenderfer2015decision}
Kochenderfer, M.~J. 2015.
\newblock \emph{Decision making under uncertainty: theory and application}.
\newblock MIT press.

\bibitem[{Laroche, Trichelair, and des
  Combes(2019)}]{DBLP:conf/icml/LarocheTC19}
Laroche, R.; Trichelair, P.; and des Combes, R.~T. 2019.
\newblock Safe Policy Improvement with Baseline Bootstrapping.
\newblock In \emph{{ICML}}, volume~97, 3652--3661. {PMLR}.

\bibitem[{Levine et~al.(2020)Levine, Kumar, Tucker, and
  Fu}]{DBLP:journals/corr/abs-2005-01643}
Levine, S.; Kumar, A.; Tucker, G.; and Fu, J. 2020.
\newblock Offline Reinforcement Learning: Tutorial, Review, and Perspectives on
  Open Problems.
\newblock \arXiv{2005.01643}.

\bibitem[{Madani, Hanks, and Condon(2003)}]{DBLP:journals/ai/MadaniHC03}
Madani, O.; Hanks, S.; and Condon, A. 2003.
\newblock On the undecidability of probabilistic planning and related
  stochastic optimization problems.
\newblock \emph{Artif. Intell.}, 147(1-2): 5--34.

\bibitem[{McCallum(1993)}]{DBLP:conf/icml/McCallum93}
McCallum, A. 1993.
\newblock Overcoming Incomplete Perception with Utile Distinction Memory.
\newblock In Utgoff, P.~E., ed., \emph{{ICML}}, 190--196.

\bibitem[{Meuleau et~al.(1999{\natexlab{a}})Meuleau, Kim, Kaelbling, and
  Cassandra}]{DBLP:conf/uai/MeuleauKKC99}
Meuleau, N.; Kim, K.; Kaelbling, L.~P.; and Cassandra, A.~R.
  1999{\natexlab{a}}.
\newblock Solving POMDPs by Searching the Space of Finite Policies.
\newblock In \emph{{UAI}}, 417--426. Morgan Kaufmann.

\bibitem[{Meuleau et~al.(1999{\natexlab{b}})Meuleau, Peshkin, Kim, and
  Kaelbling}]{DBLP:conf/uai/MeuleauPKK99}
Meuleau, N.; Peshkin, L.; Kim, K.; and Kaelbling, L.~P. 1999{\natexlab{b}}.
\newblock Learning Finite-State Controllers for Partially Observable
  Environments.
\newblock In \emph{{UAI}}, 427--436. Morgan Kaufmann.

\bibitem[{Nadjahi, Laroche, and des Combes(2019)}]{DBLP:conf/pkdd/NadjahiLC19}
Nadjahi, K.; Laroche, R.; and des Combes, R.~T. 2019.
\newblock Safe Policy Improvement with Soft Baseline Bootstrapping.
\newblock In \emph{{ECML/PKDD} {(3)}}, volume 11908, 53--68. Springer.

\bibitem[{Namkoong et~al.(2020)Namkoong, Keramati, Yadlowsky, and
  Brunskill}]{Namkoong2020}
Namkoong, H.; Keramati, R.; Yadlowsky, S.; and Brunskill, E. 2020.
\newblock Off-policy Policy Evaluation For Sequential Decisions Under
  Unobserved Confounding.
\newblock In \emph{NeurIPS}, 18819--18831.

\bibitem[{Petrik, Ghavamzadeh, and Chow(2016)}]{Petrik2016}
Petrik, M.; Ghavamzadeh, M.; and Chow, Y. 2016.
\newblock Safe Policy Improvement by Minimizing Robust Baseline Regret.
\newblock In \emph{{NIPS}}, 2298--2306.

\bibitem[{Puterman(1994)}]{Puterman1994}
Puterman, M.~L. 1994.
\newblock \emph{{Markov Decision Processes: Discrete Stochastic Dynamic
  Programming}}.
\newblock New York, NY, USA: John Wiley \& Sons, Inc., 1st edition.

\bibitem[{Rockafellar and Uryasev(2000)}]{Rockafellar2000}
Rockafellar, R.~T.; and Uryasev, S. 2000.
\newblock Optimization of conditional value-at-risk.
\newblock \emph{Journal of risk}, 2(3): 21--41.

\bibitem[{Satija et~al.(2021)Satija, Thomas, Pineau, and
  Laroche}]{DBLP:conf/nips/SatijaTPL21}
Satija, H.; Thomas, P.~S.; Pineau, J.; and Laroche, R. 2021.
\newblock Multi-Objective {SPIBB:} Seldonian Offline Policy Improvement with
  Safety Constraints in Finite MDPs.
\newblock In \emph{NeurIPS}, 2004--2017.

\bibitem[{Scholl et~al.(2022)Scholl, Dietrich, Otte, and
  Udluft}]{DBLP:conf/icaart/SchollDOU22}
Scholl, P.; Dietrich, F.; Otte, C.; and Udluft, S. 2022.
\newblock Safe Policy Improvement Approaches on Discrete Markov Decision
  Processes.
\newblock In \emph{{ICAART} {(2)}}, 142--151. {SCITEPRESS}.

\bibitem[{Silver and Veness(2010)}]{DBLP:conf/nips/SilverV10}
Silver, D.; and Veness, J. 2010.
\newblock Monte-Carlo Planning in Large POMDPs.
\newblock In \emph{{NIPS}}, 2164--2172. Curran Associates, Inc.

\bibitem[{Sim{\~{a}}o, Laroche, and des
  Combes(2020)}]{DBLP:conf/atal/SimaoLC20}
Sim{\~{a}}o, T.~D.; Laroche, R.; and des Combes, R.~T. 2020.
\newblock Safe Policy Improvement with an Estimated Baseline Policy.
\newblock In \emph{{AAMAS}}, 1269--1277. IFAAMAS.

\bibitem[{Sim{\~{a}}o and Spaan(2019{\natexlab{a}})}]{DBLP:conf/aaai/SimaoS19}
Sim{\~{a}}o, T.~D.; and Spaan, M. T.~J. 2019{\natexlab{a}}.
\newblock Safe Policy Improvement with Baseline Bootstrapping in Factored
  Environments.
\newblock In \emph{{AAAI}}, 4967--4974. {AAAI} Press.

\bibitem[{Sim{\~{a}}o and Spaan(2019{\natexlab{b}})}]{DBLP:conf/ijcai/SimaoS19}
Sim{\~{a}}o, T.~D.; and Spaan, M. T.~J. 2019{\natexlab{b}}.
\newblock Structure Learning for Safe Policy Improvement.
\newblock In \emph{{IJCAI}}, 3453--3459. ijcai.org.

\bibitem[{Smallwood and Sondik(1973)}]{DBLP:journals/ior/SmallwoodS73}
Smallwood, R.~D.; and Sondik, E.~J. 1973.
\newblock The Optimal Control of Partially Observable Markov Processes over a
  Finite Horizon.
\newblock \emph{Oper. Res.}, 21(5): 1071--1088.

\bibitem[{Sutton and Barto(1998)}]{DBLP:books/lib/SuttonB98}
Sutton, R.~S.; and Barto, A.~G. 1998.
\newblock \emph{Reinforcement learning - an introduction}.
\newblock Adaptive computation and machine learning. {MIT} Press.

\bibitem[{Tennenholtz, Shalit, and Mannor(2020)}]{Tennenholtz2020}
Tennenholtz, G.; Shalit, U.; and Mannor, S. 2020.
\newblock Off-Policy Evaluation in Partially Observable Environments.
\newblock In \emph{Proc. AAAI}, 10276--10283. {AAAI} Press.

\bibitem[{Thomas, Theocharous, and
  Ghavamzadeh(2015{\natexlab{a}})}]{DBLP:conf/aaai/ThomasTG15}
Thomas, P.~S.; Theocharous, G.; and Ghavamzadeh, M. 2015{\natexlab{a}}.
\newblock High-Confidence Off-Policy Evaluation.
\newblock In \emph{{AAAI}}, 3000--3006. {AAAI} Press.

\bibitem[{Thomas, Theocharous, and
  Ghavamzadeh(2015{\natexlab{b}})}]{DBLP:conf/icml/ThomasTG15}
Thomas, P.~S.; Theocharous, G.; and Ghavamzadeh, M. 2015{\natexlab{b}}.
\newblock High Confidence Policy Improvement.
\newblock In \emph{{ICML}}, volume~37, 2380--2388. JMLR.org.

\bibitem[{Weissman et~al.(2003)Weissman, Ordentlich, Seroussi, Verdu, and
  Weinberger}]{Weissman2003}
Weissman, T.; Ordentlich, E.; Seroussi, G.; Verdu, S.; and Weinberger, M.~J.
  2003.
\newblock {Inequalities for the $L_1$ Deviation of the Empirical Distribution}.
\newblock Technical report, Hewlett-Packard Labs, Palo Alto, United States.

\bibitem[{Williams and Young(2007)}]{DBLP:journals/csl/WilliamsY07}
Williams, J.~D.; and Young, S.~J. 2007.
\newblock Partially observable Markov decision processes for spoken dialog
  systems.
\newblock \emph{Comput. Speech Lang.}, 21(2): 393--422.

\bibitem[{Yeager et~al.(2022)Yeager, Moss, Norrish, and
  Thomas}]{DBLP:conf/itp/YeagerMNT22}
Yeager, J.; Moss, J. E.~B.; Norrish, M.; and Thomas, P.~S. 2022.
\newblock Mechanizing Soundness of Off-Policy Evaluation.
\newblock In \emph{{ITP}}, volume 237, 32:1--32:20.

\end{thebibliography}

\newpage
\onecolumn
\appendix

\newpage
\section{Proof of Theorem~\ref{th:spibb_history_mdp}}\label{appx:proof}

\paragraph{Theorem~\ref{th:spibb_history_mdp}} ($\zeta$-bound on history MDP)\textbf{.}\hspace{1em} 
\emph{Let $\Pib$ be the set of policies under the constraint of following $\pib$ when
$(\nza) \in \sU$.
Then, the policy $\pii$ computed by the SPIBB algorithm on the history MDP (Definition~\ref{def:hismdp}) is a $\zeta$-approximate safe policy improvement over the behavior policy $\pib$ with high probability $1-\delta$,
where:
\begin{equation}
    \zeta =
    \frac{4V_{\max}}{1-\gamma} \sqrt{\frac{2}{\Nmin}\log \frac{2 |\sH||A|2^{|Z|}}{\delta}} - \rho(\pii,\mlemdp)  + \rho(\pib,\mlemdp).
\label{eq:zetaSPIBB_POMCP_ha}
\end{equation}
}

\begin{proof}[Proof of theorem~\ref{th:spibb_history_mdp}]
First, we restate Theorem 2.1 by \citet{Weissman2003} that bounds the L1-error of an empirical probability distribution $\tilde{P}$ given a finite number of samples.

\begin{proposition*}[Theorem 2.1 by \citet{Weissman2003}]
Given $m$ i.i.d. random variables distributed according to $P$ and given an estimated probability distribution $\tilde{P}$ computed from those $m$ variables, we have
\[
    \Pr(\| P - \tilde{P}\|_1 \geq \epsilon)
    \leq (2^n-2)e^{-\nicefrac{m\epsilon^2}{2}}.
\]
\end{proposition*}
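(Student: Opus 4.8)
The plan is to recover this bound — originally due to Weissman et al. — from the variational characterization of the $L_1$ norm combined with a Hoeffding-type tail bound. Writing $n = |X|$ for the support size and identifying $P$ and $\tilde{P}$ with vectors in $\R^n$, I would begin from the identity
$\|P - \tilde{P}\|_1 = \max_{\sigma \in \{-1,+1\}^n} \langle \sigma, P - \tilde{P}\rangle$,
where $\langle \sigma, v\rangle = \sum_x \sigma_x v_x$ and the maximum is attained at $\sigma_x = \mathrm{sign}(P(x) - \tilde{P}(x))$. This turns the event $\{\|P - \tilde{P}\|_1 \geq \epsilon\}$ into the union $\bigcup_\sigma \{\langle \sigma, P - \tilde{P}\rangle \geq \epsilon\}$ over the $2^n$ sign vectors, thereby reducing a bound on a norm to a family of bounds on scalar linear functionals.

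Next I would control each functional for a fixed $\sigma$. Since the empirical distribution is $\tilde{P}(x) = \tfrac 1m \sum_{i=1}^m \mathbb{I}(X_i = x)$, the functional rewrites as $\langle \sigma, P - \tilde{P}\rangle = \tfrac 1m \sum_{i=1}^m \big( \E[\sigma_{X_1}] - \sigma_{X_i}\big)$, that is, an empirical mean of the i.i.d.\ bounded random variables $\sigma_{X_i} \in \{-1,+1\}$ centered at their common expectation $\langle \sigma, P\rangle$. Each summand lies in an interval of length $2$, so Hoeffding's inequality applied to this average yields $\Pr(\langle \sigma, P - \tilde{P}\rangle \geq \epsilon) \leq e^{-m\epsilon^2/2}$; here the factor $2$ appearing in Hoeffding's exponent is exactly cancelled against the squared range $(b-a)^2 = 4$, producing the $\tfrac 12$ in the exponent.

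Finally I would assemble the pieces with a union bound, $\Pr(\|P - \tilde{P}\|_1 \geq \epsilon) \leq \sum_\sigma e^{-m\epsilon^2/2}$, and sharpen the naive prefactor $2^n$ to $2^n - 2$. The refinement comes from the two constant sign vectors $\sigma = +\mathbf{1}$ and $\sigma = -\mathbf{1}$: for these, $\langle \sigma, P - \tilde{P}\rangle = \pm\big(\sum_x P(x) - \sum_x \tilde{P}(x)\big) = \pm(1 - 1) = 0$, so for every $\epsilon > 0$ their events are empty and may be dropped from the union. Summing the remaining $2^n - 2$ identical terms gives the claimed $(2^n - 2)\, e^{-m\epsilon^2/2}$. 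The main obstacle — or rather the only genuinely delicate point — is the bookkeeping of constants: confirming that the $L_1$ norm is \emph{attained} over sign vectors (not merely bounded by them), matching Hoeffding's range to the target exponent, and justifying the removal of exactly the two trivial vectors rather than settling for the looser union over all $2^n$.
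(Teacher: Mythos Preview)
Your argument is correct and is essentially the standard derivation of this concentration inequality: the variational identity $\|P-\tilde P\|_1=\max_{\sigma\in\{-1,+1\}^n}\langle\sigma,P-\tilde P\rangle$, Hoeffding on each fixed functional, and the removal of the two constant sign vectors are exactly the ingredients that yield the $(2^n-2)e^{-m\epsilon^2/2}$ bound. The bookkeeping you flag (attainment of the maximum, the range $b-a=2$ matching the exponent, and the triviality of $\sigma=\pm\mathbf 1$) is all handled correctly.

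That said, the paper does not actually prove this proposition. It is quoted verbatim as Theorem~2.1 of \citet{Weissman2003} and used as a black-box lemma inside the proof of Theorem~\ref{th:spibb_history_mdp}; no argument for it is given or sketched. So there is nothing to compare your approach against in the paper itself---you have supplied a self-contained proof where the authors simply cite the literature. If anything, your derivation is slightly closer in spirit to the ``sign-vector'' formulation, whereas the original Weissman et al.\ argument is usually phrased via Scheff\'e's identity $\|P-\tilde P\|_1=2\max_{A\subseteq X}|P(A)-\tilde P(A)|$ and a union over the $2^n-2$ nontrivial subsets; the two are equivalent up to the bijection $A\leftrightarrow\sigma$.
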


To bound the L1-error of the estimated transition function of given a $(h, a)$ pair of the history MDP, given $\#_{\D}(h,a)$ samples for $(h,a) \in \D$ we can apply the Theorem 2.1 \citep{Weissman2003} to obtain
\begin{align*}
    \Pr\left(\| T(\cdot \mid h, a) - \tilde{T}( \cdot \mid h,a) \|_1 \geq \epsilon\right) \\
    \quad \leq (2^{|Z|}-2)e^{-\nicefrac{\#_{\D}(h,a)\epsilon^2}{2}}.
\end{align*}
Choosing an appropriate $\Nmin$, we can do a union bound over all history-action pairs, we get that the probability of having an arbitrary history-action pair $(h, a)$ of which the L1-error is larger than $\epsilon$ is bounded by $\delta$:
\begin{align*}
\Pr&\left(\exists (h,a) \in \sH\times A: \| T(\cdot \mid h, a) - \tilde{T}(\cdot \mid h, a) \|_1 \geq \epsilon \right) \\
    & \quad \leq \sum_{h,a \in \sH \times A}Pr\left(\| T(\cdot \mid h,a) - \tilde{T}(\cdot \mid h,a) \|_1 \geq \epsilon\right) \\
    & \quad \leq |\sH|\,|A|   (2^{|Z|}-2)e^{-\nicefrac{\Nmin\epsilon^2}{2}} \\
    & \quad = \delta.
\end{align*}
Reordering the equation above, we get
\begin{equation}\label{eq:epsilon}
\epsilon = \sqrt{\frac{2}{\Nmin}\log \frac{2 |\sH||A|2^{|Z|}}{\delta}},
\end{equation}
and then
\begin{align*}
\Pr\left(\exists h,a \in \sH\times A. \| T(\cdot \mid h, a) - \tilde{T}( \cdot \mid h,a) \|_1 \geq \epsilon \right) \leq \delta.
\end{align*}

The last step of the proof is to replace the $\epsilon$ from Equation 36 of~\citep{DBLP:conf/icml/LarocheTC19} by the value from Equation~\eqref{eq:epsilon}, resulting in  
\begin{align*}
    &\zeta =
    \frac{4V_{\max}}{1-\gamma} \sqrt{\frac{2}{\Nmin}\log \frac{2 |\sH||A|2^{|Z|}}{\delta}} - \rho(\pii,\mlemdp)  + \rho(\pib, \mlemdp),
\end{align*}
as stated in the Theorem.
\end{proof}

\section{Experiments Details}\label{appx:experiments_description}

\paragraph{Further details on the problems.}
All experiments use a discount factor $\gamma = 0.95$, with at most $300$ steps in an episode.

\paragraph{Training the behavior policies.}
We train a Q-learning agent over $5\,000$ episodes with learning rate($\alpha$) and exploration rate~($\epsilon$) decaying exponentially after each episode.
\[
    \alpha_{i} \leftarrow \alpha_0 \exp(-\lambda * i),
\]
\[
    \epsilon_{i} \leftarrow \epsilon_0 \exp(-\lambda * i),
\]
where $i$ is the episode index, $\lambda$ is the decay rate, $\alpha_0$ and $\epsilon_0$ are the initial learning rate and initial exploration rate, respectively.

After training, we extract the behavior policy $\pib$ using a softmax function 
\[
\pib(a \mid s) = \frac{e^{-\tau Q(s, a)}}{\sum_{a'\in A}e^{-\tau Q(s, a')}},
\]
where $Q(s,a)$ is the final value of the state-action pair $s,a$ and $\tau$ a hyperparameter to control the randomness of the behavior policy.
As mentioned in the main paper, the state in this case is the history of the last $k$ observations.

\begin{table}[tbph]
    \centering
    \begin{tabular}{lcrrr} \toprule
         && CheeseMaze & Tiger & Voicemail  \\ 
         \midrule
         Initial exploration rate &$\epsilon_0$ &$0.500$&$0.500$&$0.500$ \\
         Initial learning rate &$\alpha_0$      &$1.000$&$1.000$&$1.000$ \\
         Decay rate &$\lambda$                   &$0.002$&$0.002$&$0.002$ \\
         Softmax temperature & $\tau$                 &$15.000$&$0.050$&$0.300$ \\
         \\
         \bottomrule
    \end{tabular}
    \caption{Hyperparameters to generate the behavior policies.}
    \label{tab:my_label}
\end{table}

\paragraph{Computing specifications.} All experiments were performed on a machine with a 4GHz Intel Core i9 CPU and 64Gb of memory, using a single core for each experiment.

\section{Supplementary Experimental Results}\label{appx:results}

\begin{figure*}[htbp]
    \centering
    \begin{minipage}{.87\textwidth}
    \includegraphics[width=.24\textwidth]{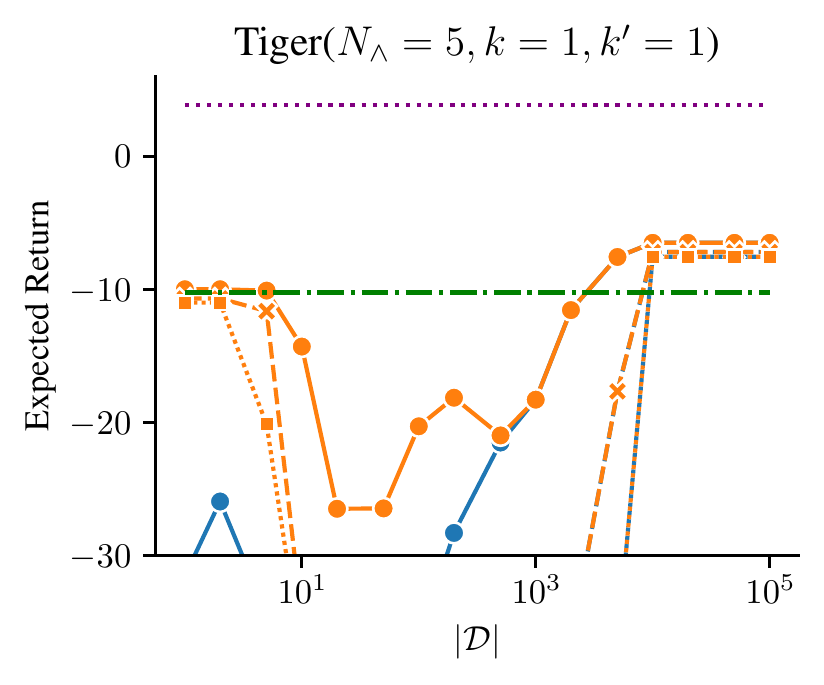}  
    \includegraphics[width=.24\textwidth]{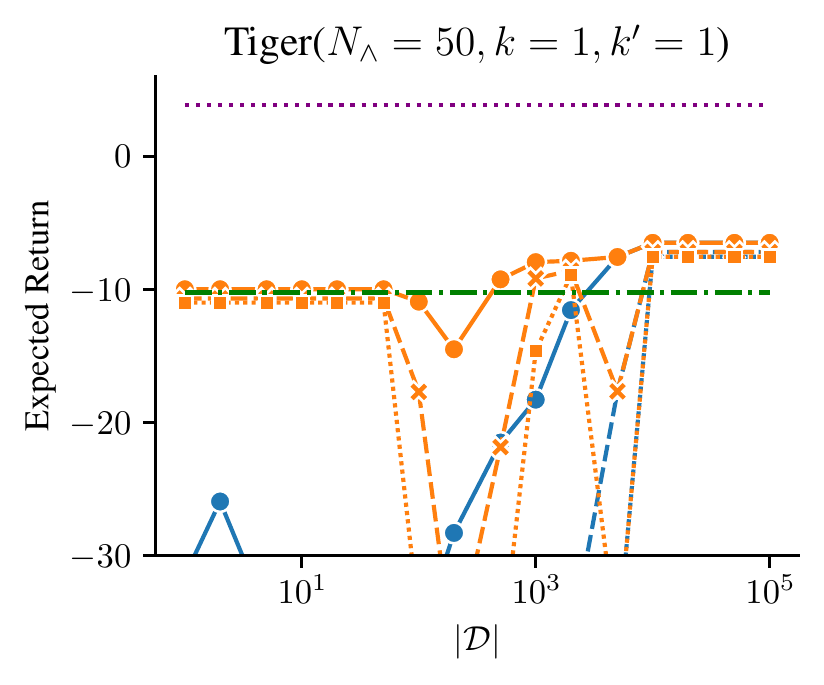} 
    \includegraphics[width=.24\textwidth]{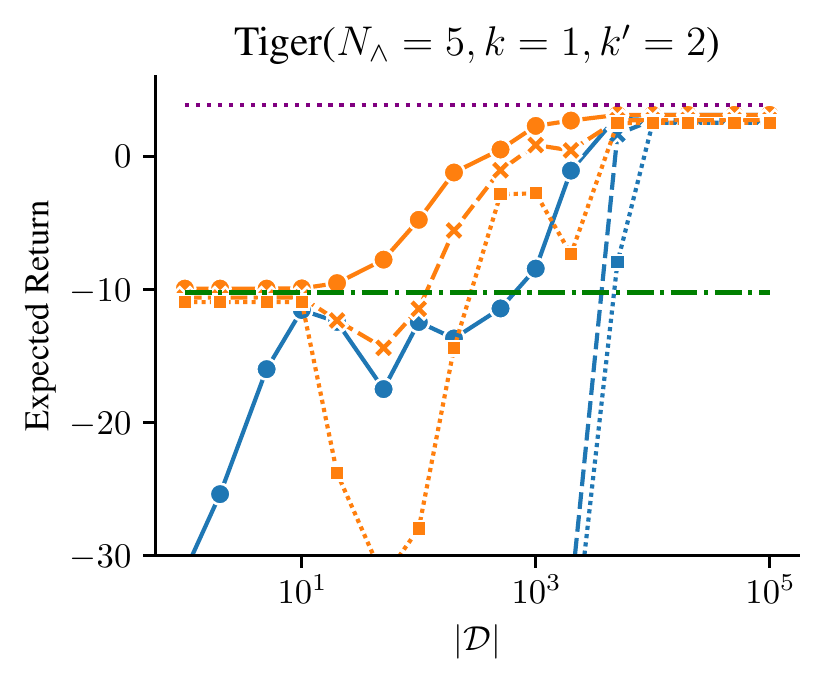} 
    \includegraphics[width=.24\textwidth]{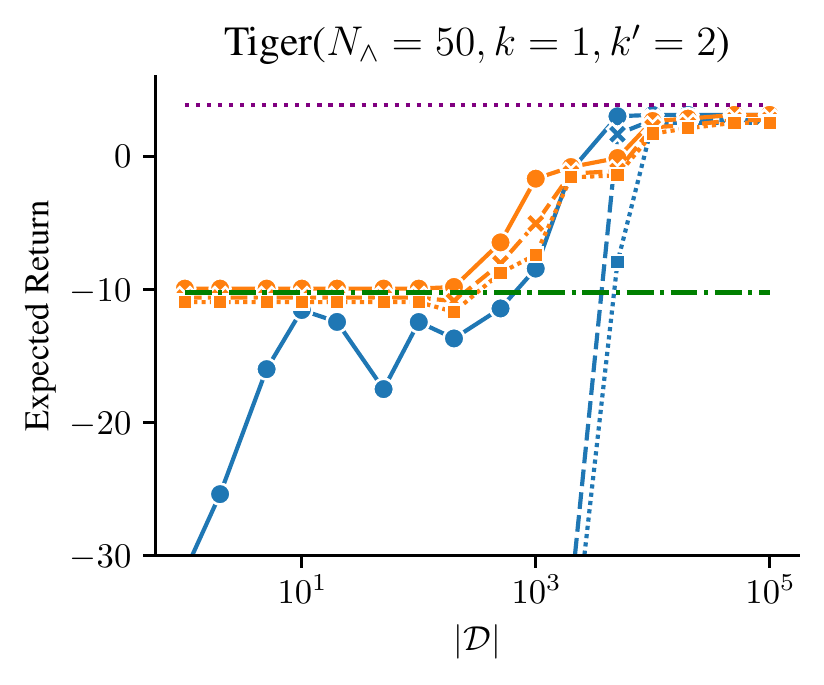}
    \end{minipage}
    \begin{minipage}{.12\textwidth}
    \includegraphics[width=\textwidth]{Tiger-v0_k-1/curve_legend} 
    \end{minipage}
    \caption{Policy improvement on the Tiger environment for datasets collected with a memoryless policy ($k=1$), varying the hyperparameters pairs column-wise: $(\Nmin = 5, k' = 1)$, $(\Nmin = 50, k' = 1)$,  $(\Nmin = 5, k' = 2)$, and $(\Nmin = 50, k' = 2)$. The plots show the mean (solid line), $10\%$-CVaR (dashed line) and $1\%$-CVaR (dotted line). The performance of the behavior policy is shown in green (dash-dotted line).}
    \label{fig:performance_tiger_memoryless}
\end{figure*}

\begin{figure*}[tbp]
    \centering
    
    \begin{minipage}{.92\textwidth}
    \includegraphics[width=.32\textwidth]{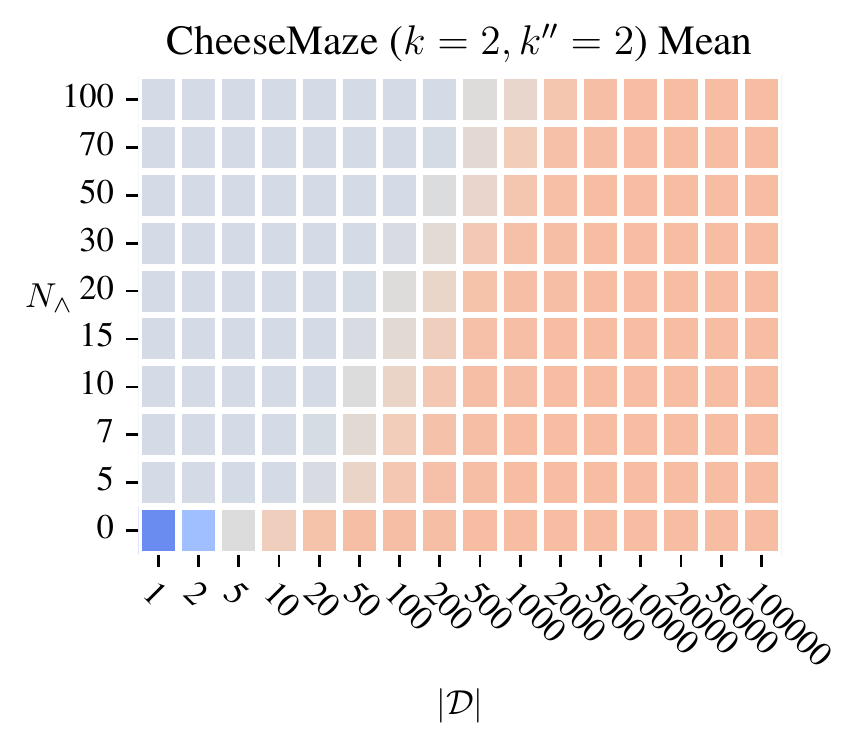}
    \includegraphics[width=.32\textwidth]{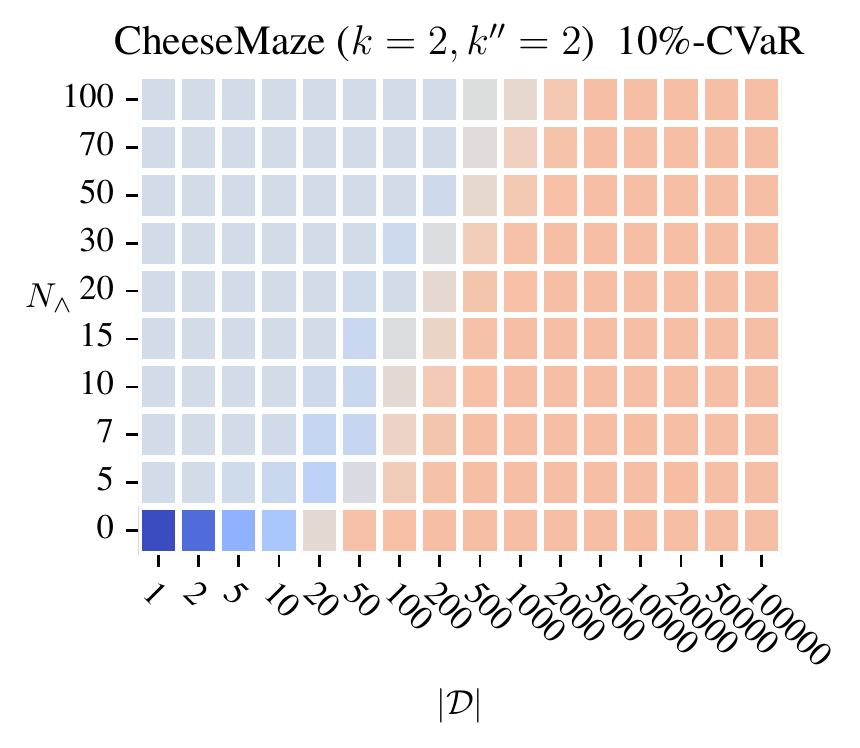}
    \includegraphics[width=.32\textwidth]{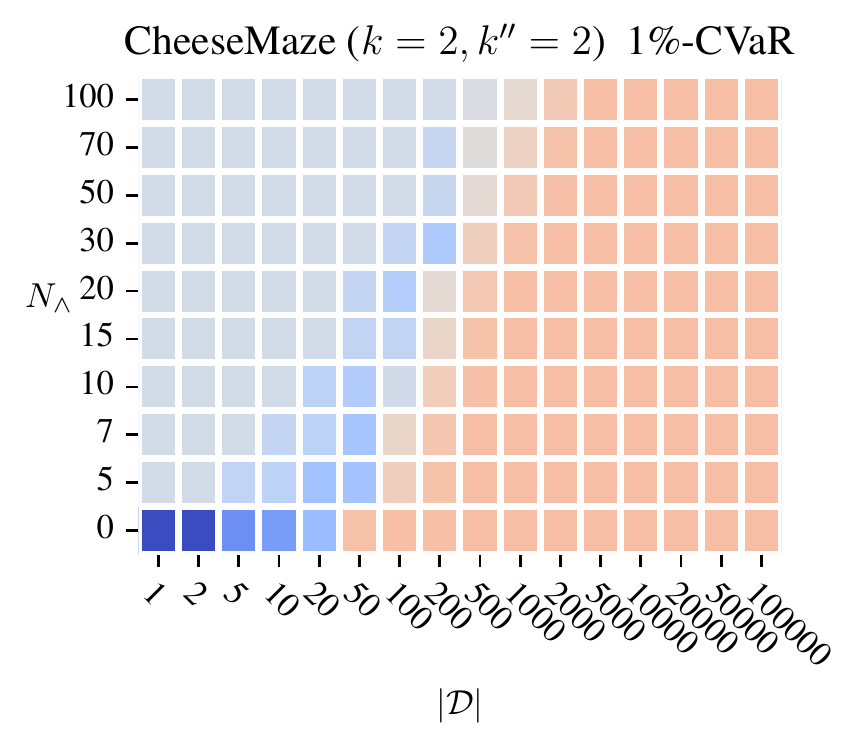}\\
    \includegraphics[width=.32\textwidth]{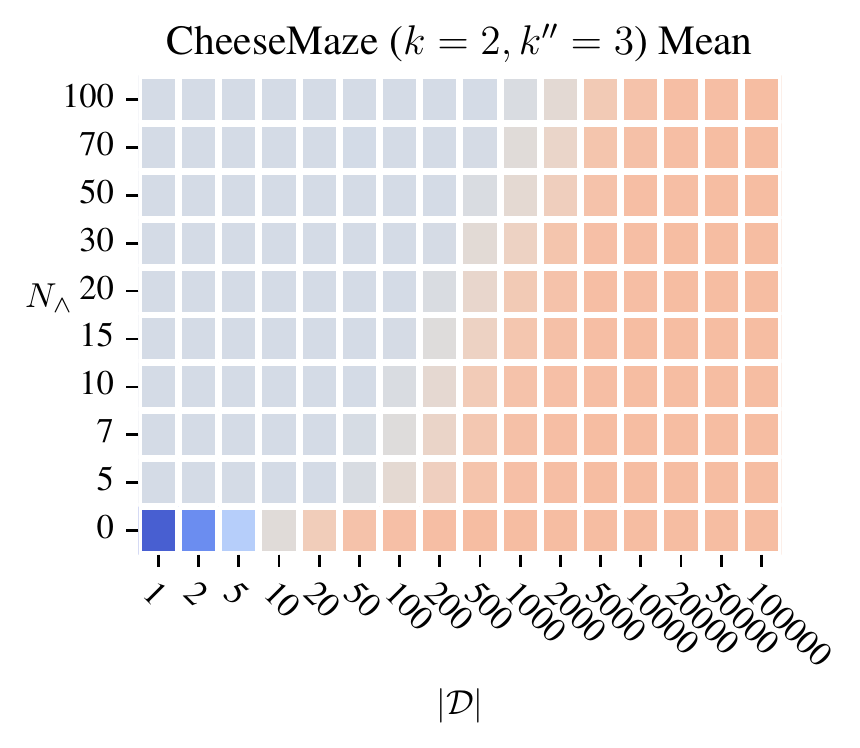}
    \includegraphics[width=.32\textwidth]{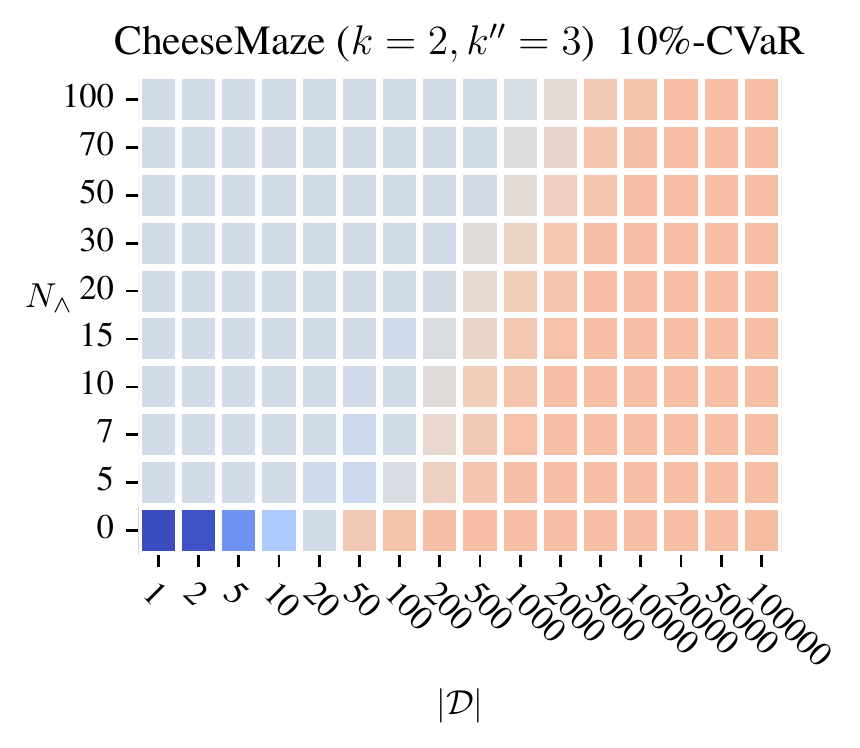}
    \includegraphics[width=.32\textwidth]{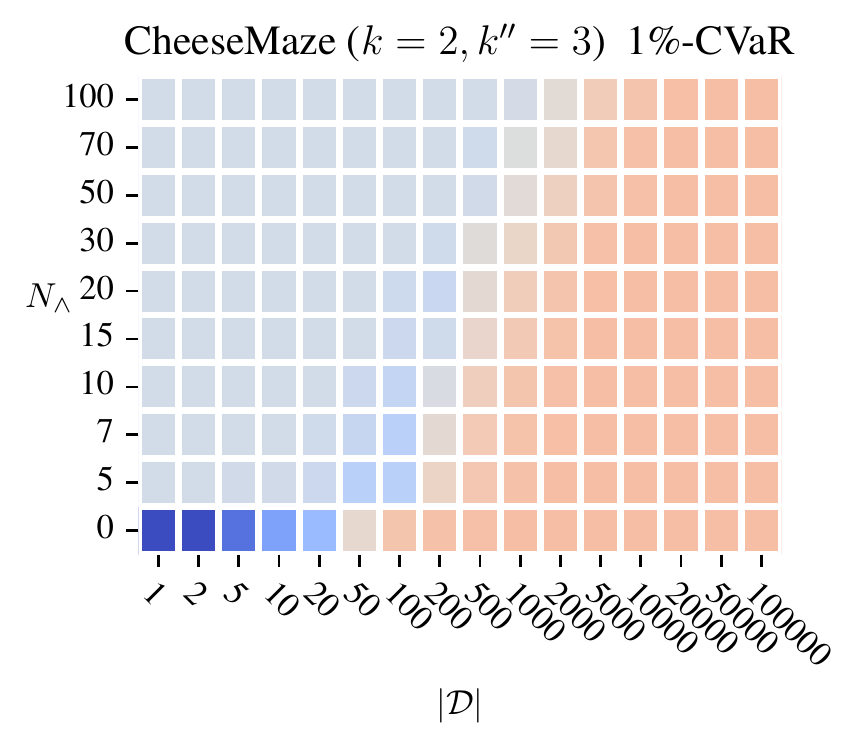}
    \end{minipage}
    \begin{minipage}{.07\textwidth}
    \includegraphics[width=\textwidth]{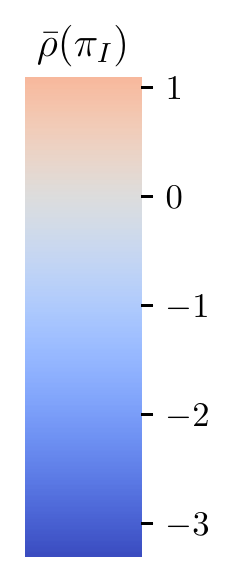}
    \end{minipage}

    \caption{
    Normalized results ($\bar{\rho}(\pii)$) on the Maze environment ($k=2$). \heatmapdescription{}
    }
    \label{fig:maze}
\end{figure*}

\begin{figure*}[tbp]
    \centering
    \begin{minipage}{.92\textwidth}
    \includegraphics[width=.32\textwidth]{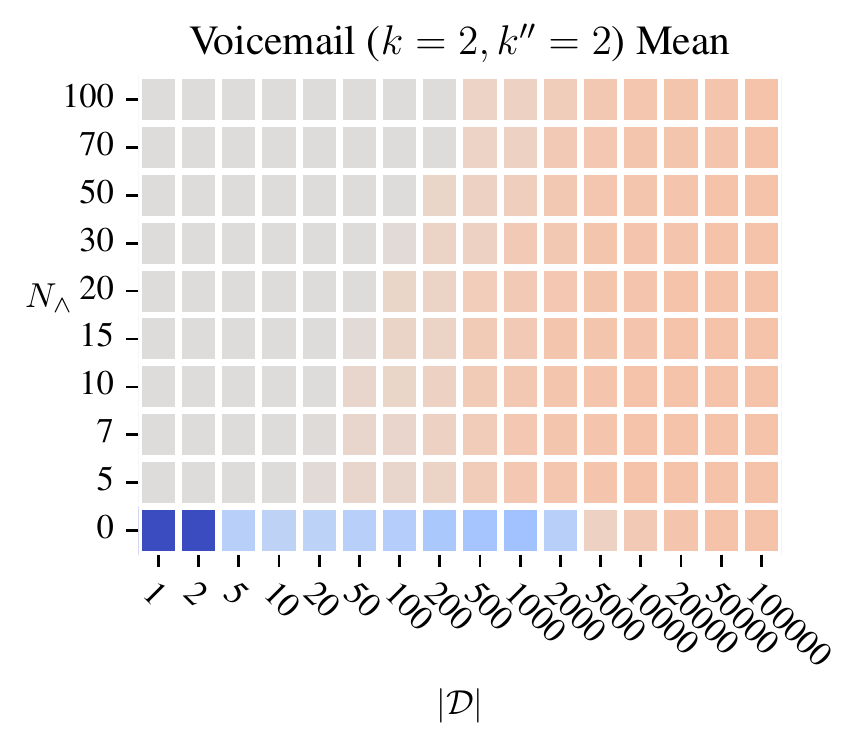}
    \includegraphics[width=.32\textwidth]{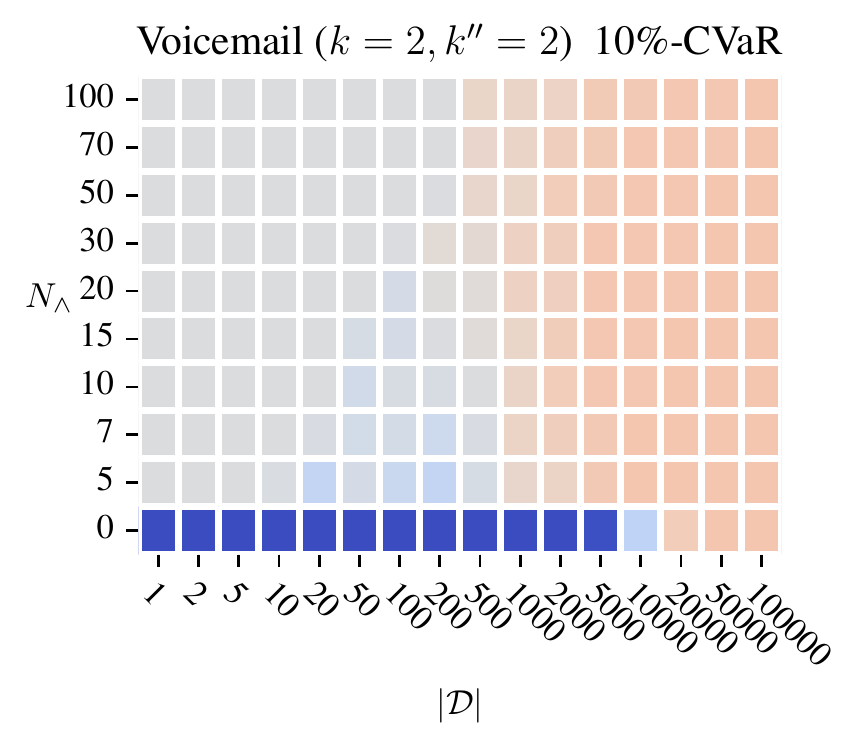}
    \includegraphics[width=.32\textwidth]{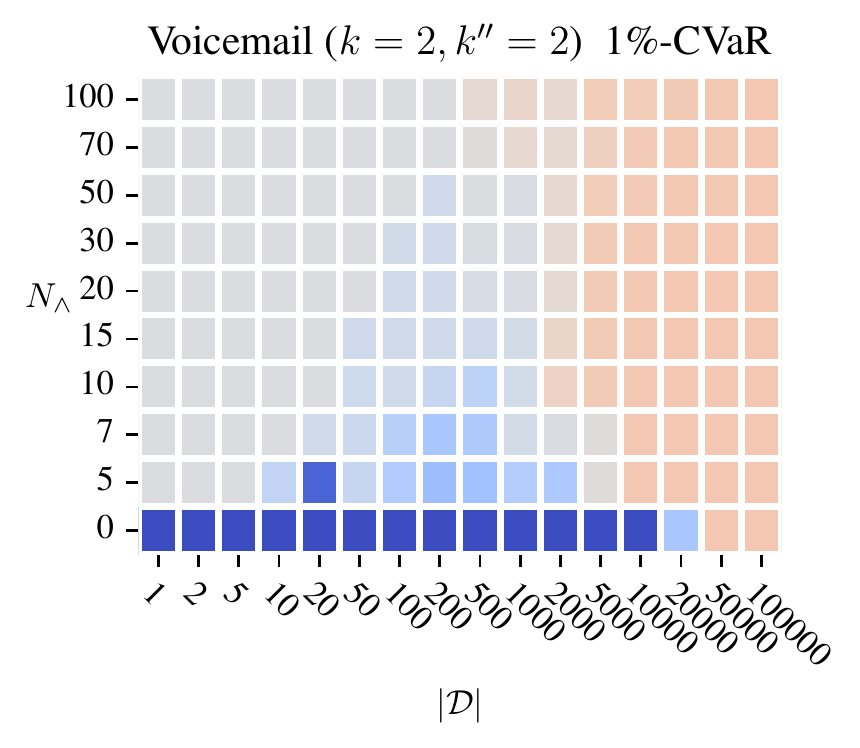}\\
    \includegraphics[width=.32\textwidth]{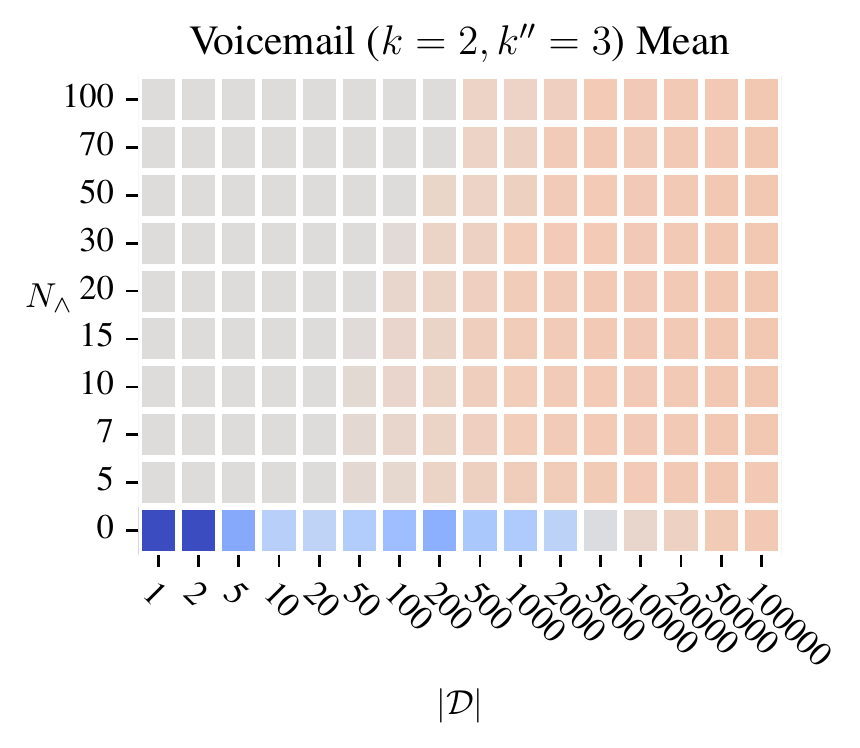}
    \includegraphics[width=.32\textwidth]{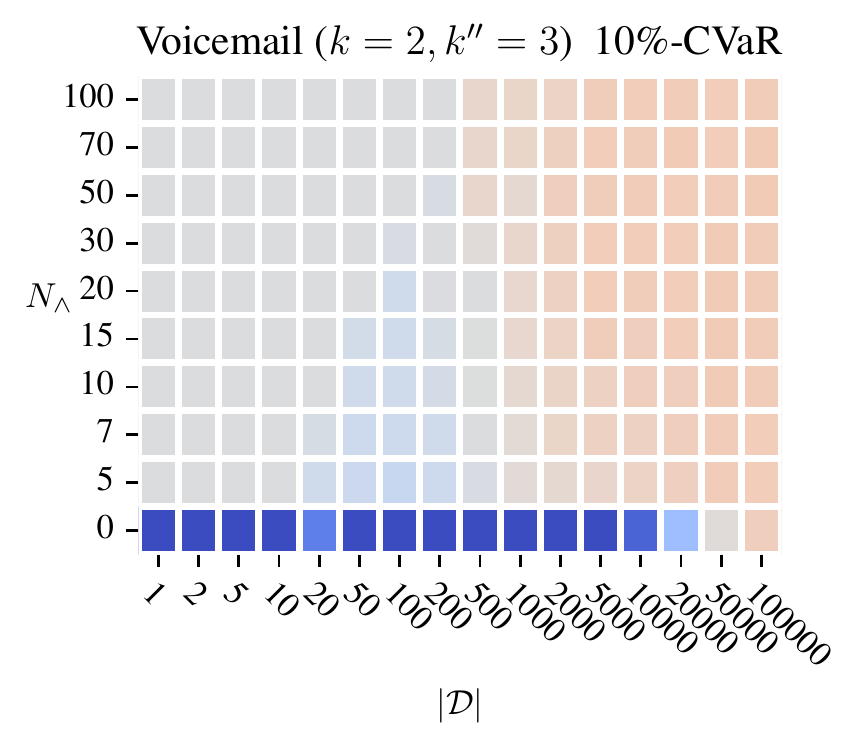}
    \includegraphics[width=.32\textwidth]{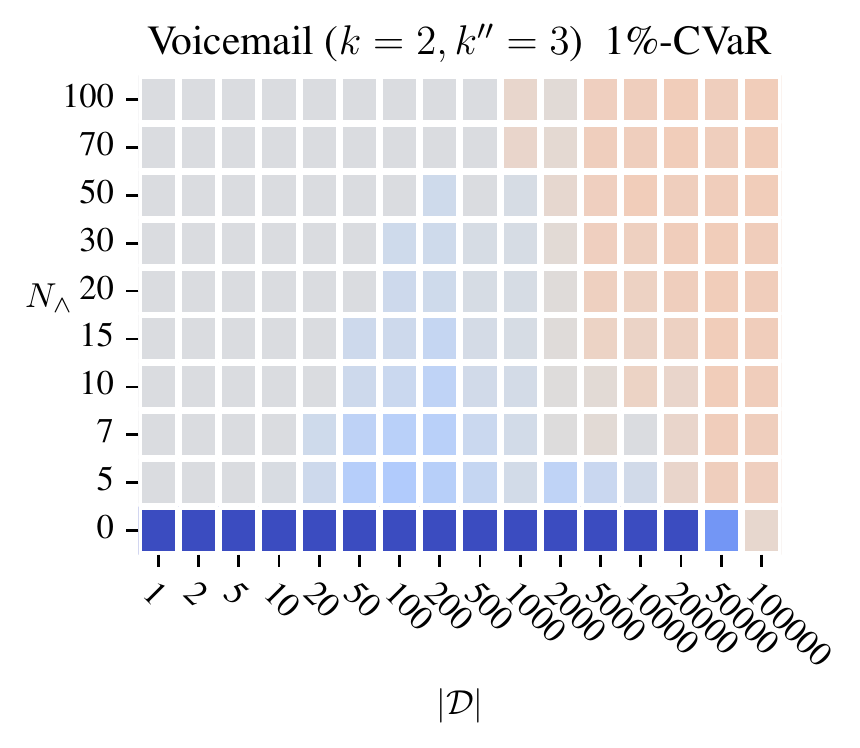}
    \end{minipage}
    \begin{minipage}{.07\textwidth}
    \includegraphics[width=\textwidth]{Voicemail-v0_k-2/heatmap_legend.pdf}
    \end{minipage}
    \caption{
    Normalized result on the Voicemail environment ($k=2$). \heatmapdescription{}
    }
    \label{fig:voicemail_heatmap_2}
\end{figure*}

\begin{figure*}[tbp]
    \centering
    \begin{minipage}{.92\textwidth}
    \includegraphics[width=.32\textwidth]{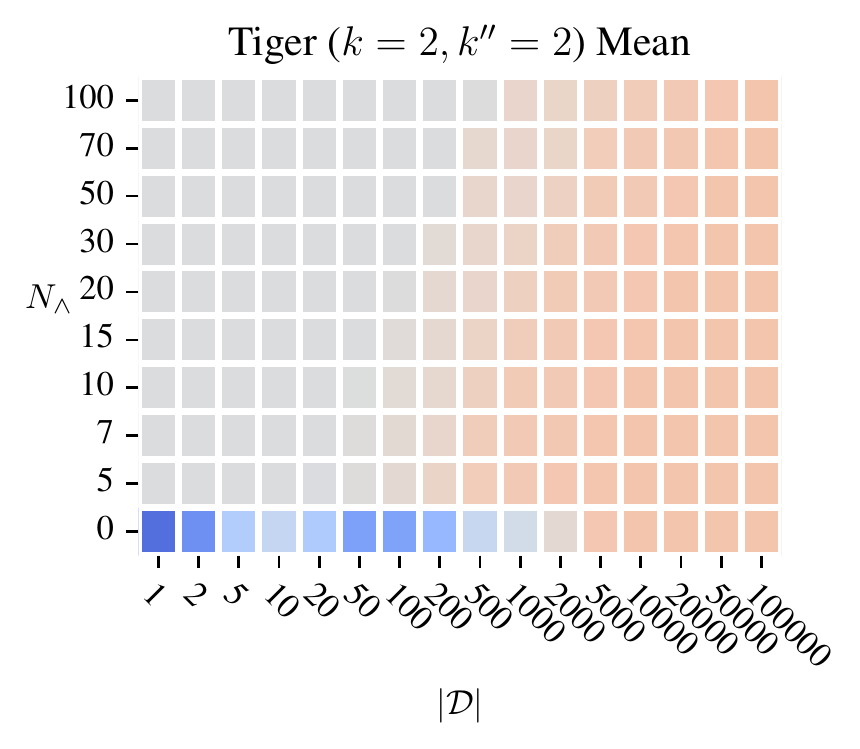}   
    \includegraphics[width=.32\textwidth]{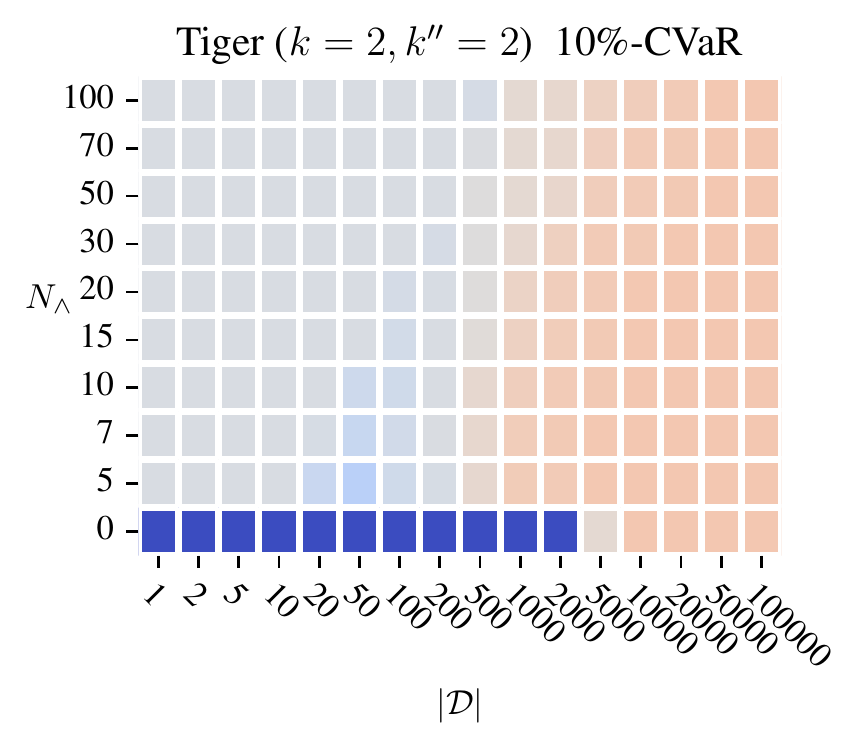} \includegraphics[width=.32\textwidth]{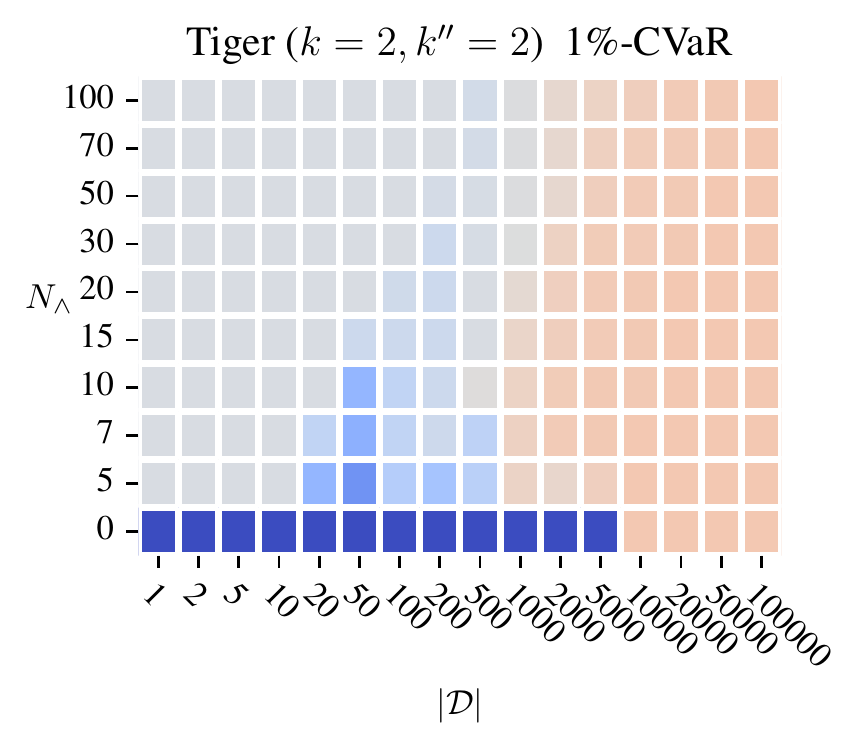}\\
    \includegraphics[width=.32\textwidth]{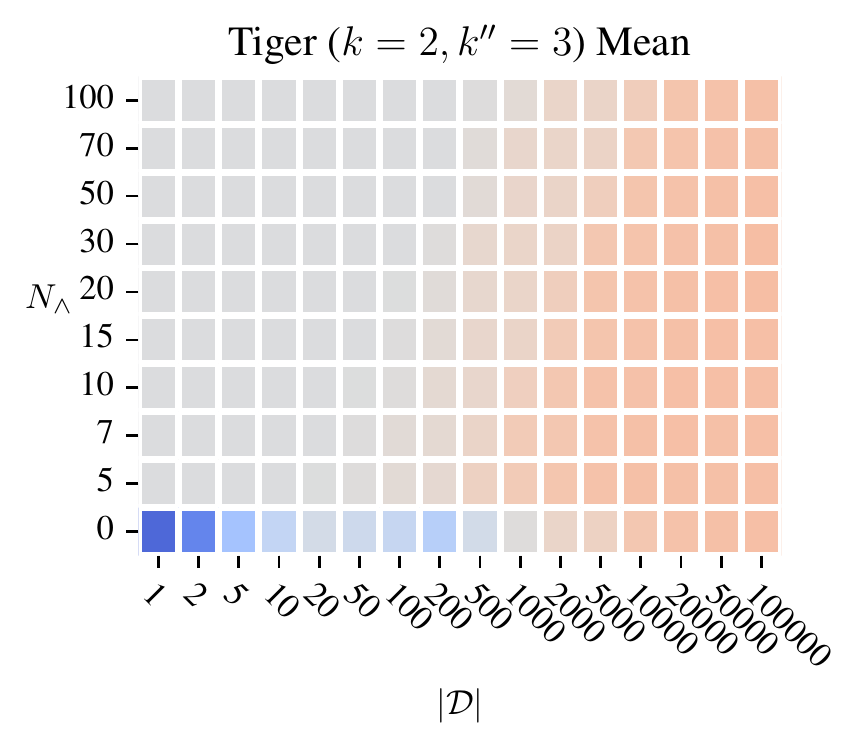}
    \includegraphics[width=.32\textwidth]{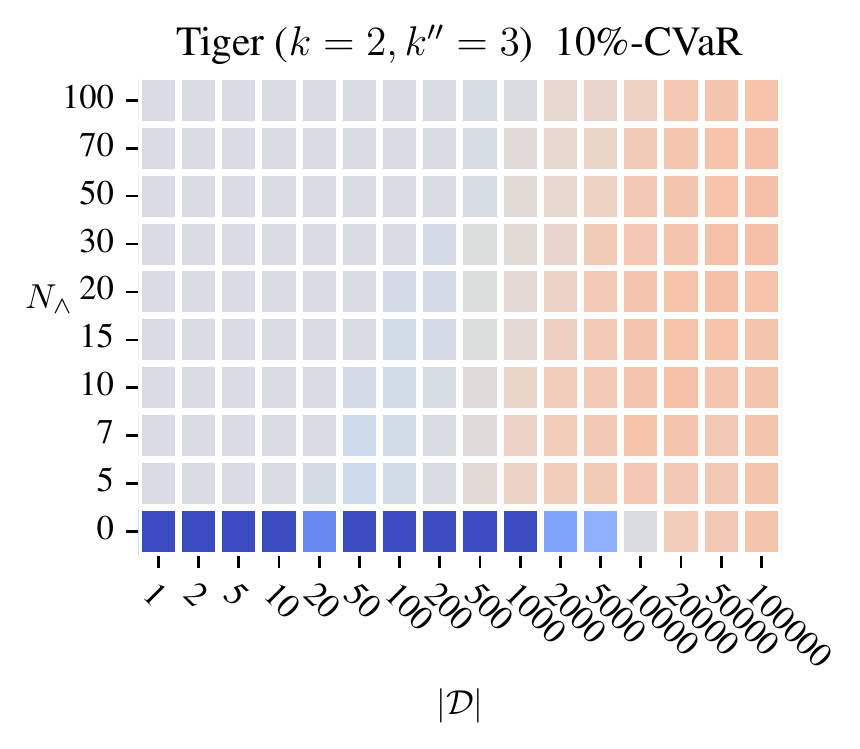}
    \includegraphics[width=.32\textwidth]{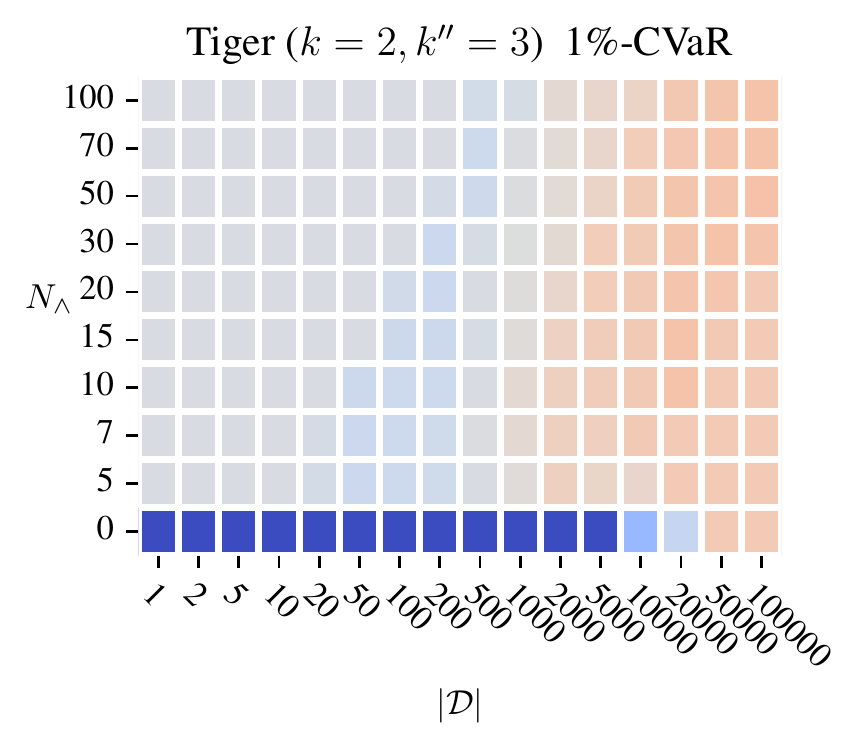}
    \end{minipage}
    \begin{minipage}{.07\textwidth}
    \includegraphics[width=\textwidth]{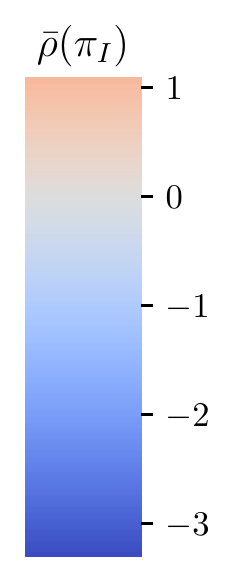}
    \end{minipage}

    \caption{Normalized results ($\bar{\rho}(\pii)$) on the Tiger environment ($k=2$). \heatmapdescription{}
    }
    \label{fig:tiger_heatmap_2}
\end{figure*}

\begin{figure*}[tbp]
    \centering
    \begin{minipage}{.92\textwidth}
    \includegraphics[width=.32\textwidth]{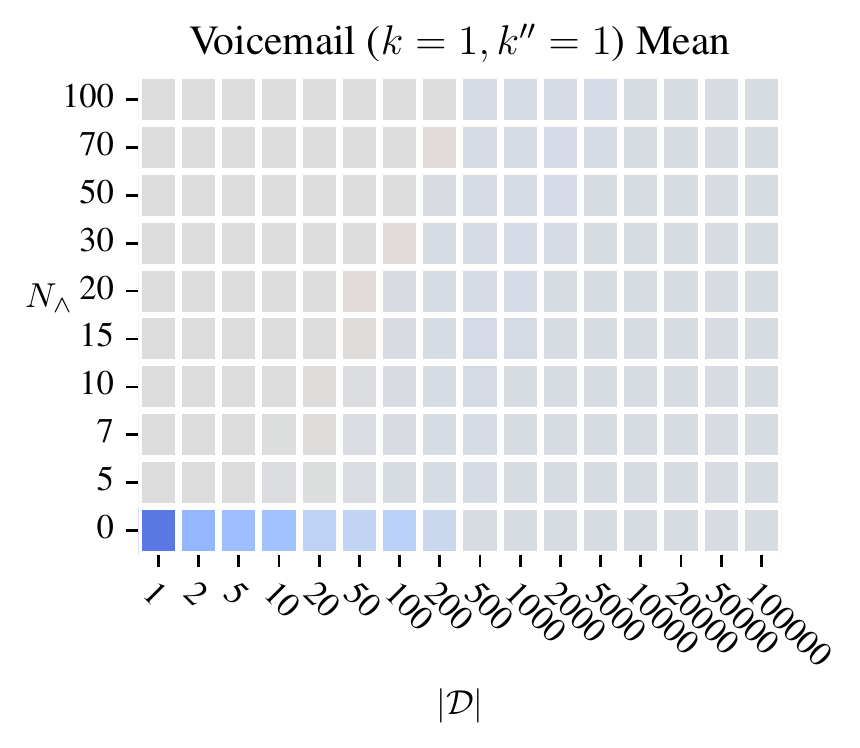}   
    \includegraphics[width=.32\textwidth]{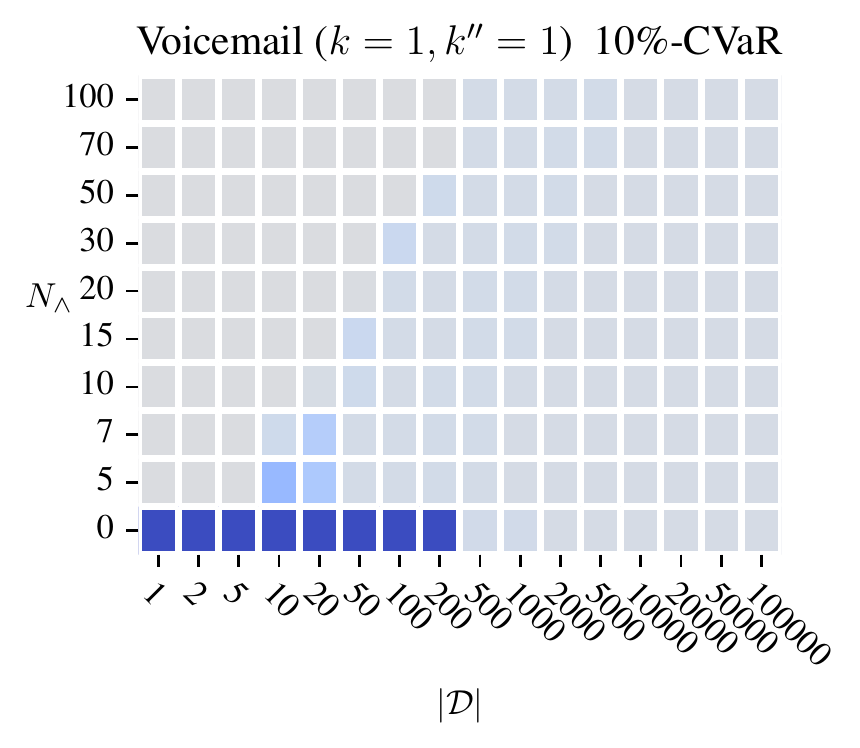} \includegraphics[width=.32\textwidth]{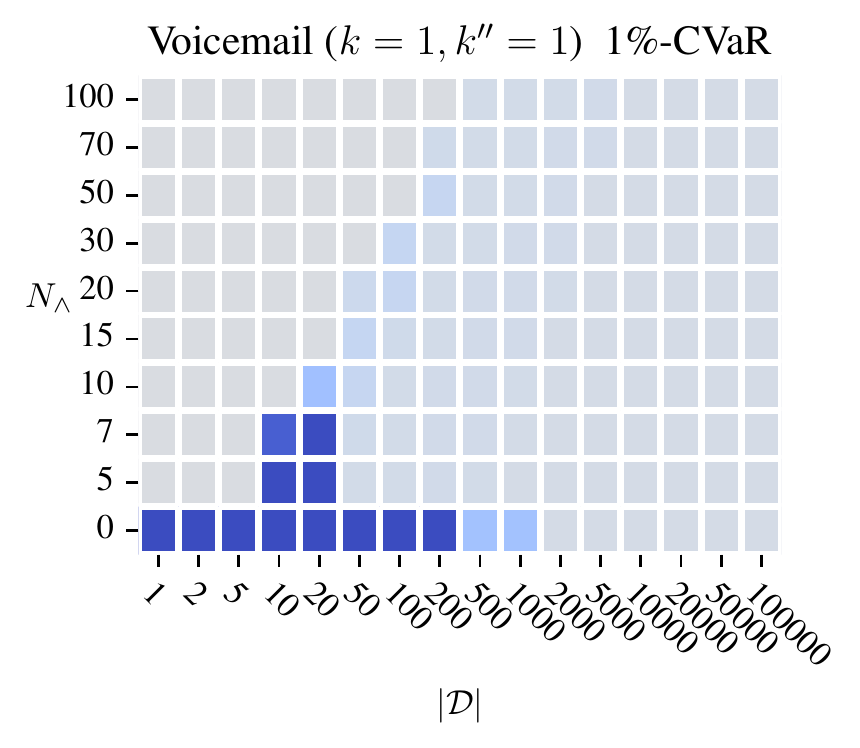}\\
    \includegraphics[width=.32\textwidth]{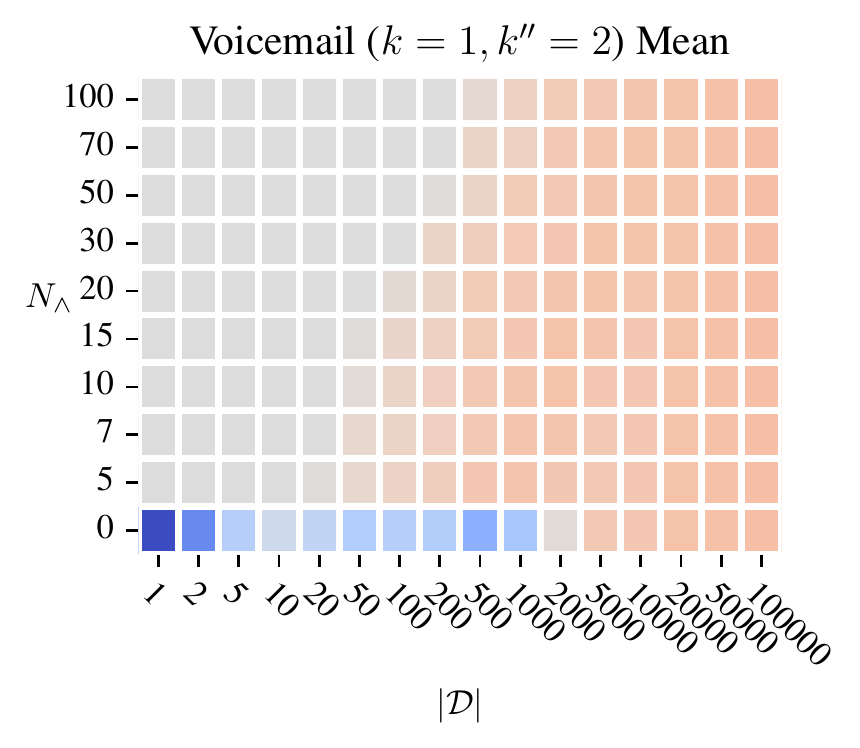}
    \includegraphics[width=.32\textwidth]{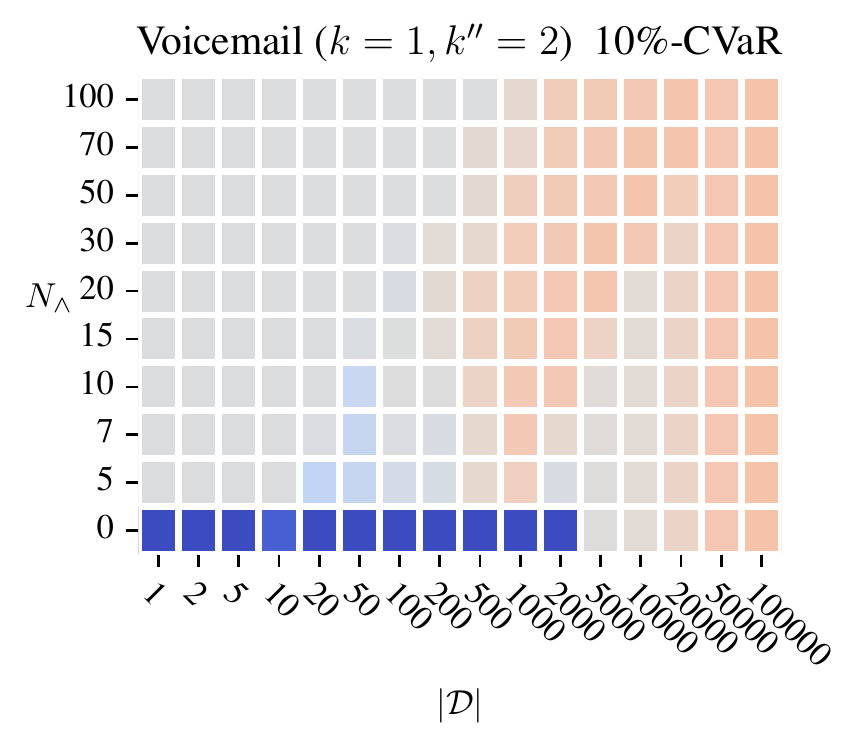}
    \includegraphics[width=.32\textwidth]{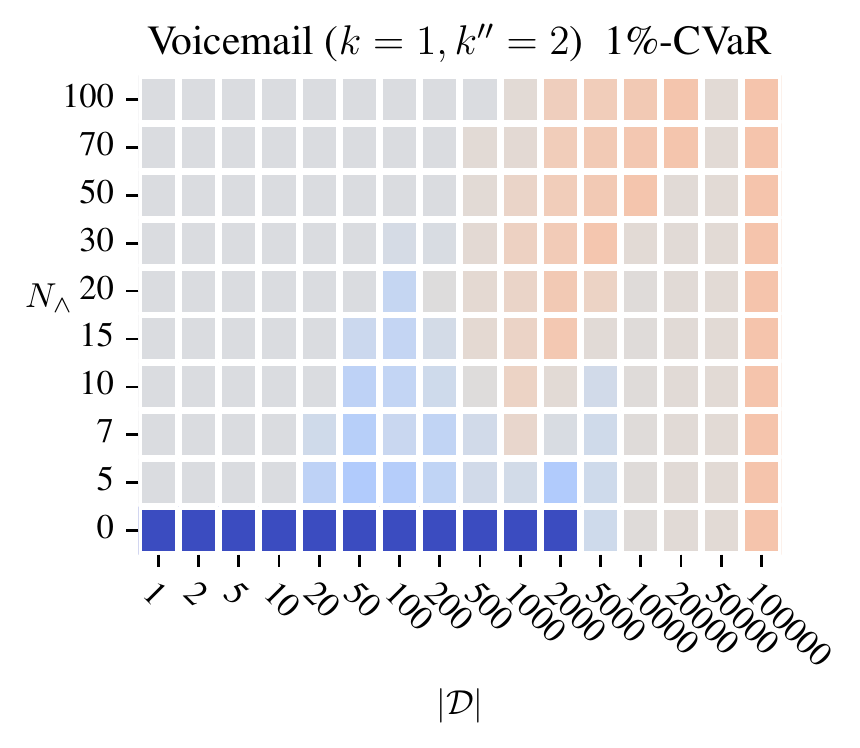}
    \end{minipage}
    \begin{minipage}{.07\textwidth}
    \includegraphics[width=\textwidth]{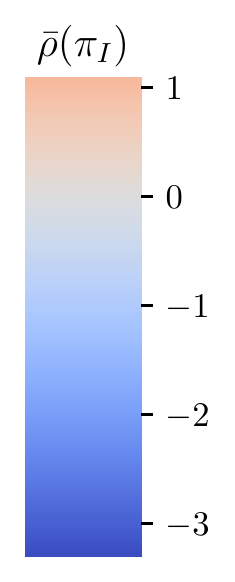}
    \end{minipage}

    \caption{Normalized results ($\bar{\rho}(\pii)$) on the Voicemail environment ($k=1$). \heatmapdescription{}
    }
    \label{fig:voicemail_heatmap_1}
\end{figure*}

\end{document}